\setlist{nolistsep}
\newcommand \argmax {\operatorname{argmax}}
\newcommand \Lap {\operatorname{Lap}}
\newcommand \PrivateSplit {\operatorname{PrivateSplit}}
\newcommand \depth {\operatorname{depth}}
\newcommand \leaves {\operatorname{leaves}}
\newcommand \Range {\operatorname{Range}}
\newtheorem{theorem}{Theorem}
\newtheorem{corollary}[theorem]{Corollary}
\newtheorem{lemma}[theorem]{Lemma}
\theoremstyle{definition}
\newtheorem{definition}[theorem]{Definition}
\declaretheoremstyle[%
  spaceabove=-6pt,%
  spacebelow=6pt,%
  headfont=\normalfont\itshape,%
  postheadspace=1em,%
  qed=\qedsymbol%
]{mystyle}
\declaretheorem[name={Proof Sketch},style=mystyle,unnumbered,
]{proofsketch}
\begin{document}
\title{
Scalable and Provably Accurate Algorithms for Differentially Private Distributed Decision Tree Learning
}
\author{Kaiwen Wang\thanks{Correspondence to: \url{wangkaiwen998@gmail.com}}\\
       Computer Science Department\\
       Carnegie Mellon University\\
       Pittsburgh, PA 15213, USA
       \and
       Travis Dick\thanks{Most of this work was done while at the School of Computer Science, Carnegie Mellon University}\\
       Department of Computer and Information Science\\
       University of Pennsylvania\\
       Philadelphia, PA 19104, USA
       \and
       Maria-Florina Balcan\\
       School of Computer Science\\
       Carnegie Mellon University\\
       Pittsburgh, PA 15213, USA}

\date{}
\maketitle

\begin{abstract}
This paper introduces the first provably accurate algorithms for differentially private, top-down decision tree learning in the distributed setting \citep{balcan2012distributed}. We propose DP-TopDown, a general privacy-preserving decision tree learning algorithm, and present two distributed implementations. Our first method NoisyCounts naturally extends the single machine algorithm by using the Laplace mechanism. Our second method LocalRNM significantly reduces communication and added noise by performing local optimization at each data holder.
We provide the first utility guarantees for differentially private top-down decision tree learning in both the single machine and distributed settings.
These guarantees show that the error of the privately-learned decision tree quickly goes to zero provided that the dataset is sufficiently large. Our extensive experiments\footnote{Code available at \url{https://github.com/kaiwenw/DPDDT}} on real datasets illustrate the trade-offs between privacy, accuracy and generalization when learning private decision trees in the distributed setting.
\end{abstract}

\section{Introduction}
New technologies for edge computing and the Internet of Things (IoT) are driving computing toward the distributed setting \citep{satyanarayanan2017emergence}. A distributed learning algorithm has access to data with more quantity and diversity than learning from one data holder.
Today, data entities comprise of mobile phones, schools, hospitals, companies, countries, and more.
As an illustrative example, consider training a model for early detection of a new strand of malaria, which has been tested in multiple hospitals across the country. Most hospitals alone would not have enough data to train an accurate model. Even if a hospital could locally train a model, the model would be biased toward and only accurate for that specific city. As such, it would perform poorly as a nationwide diagnostic tool. However, while benefits of distributed learning are evident, privacy concerns can often deter cooperation.

Differential privacy \citep{Dwork06:Calibrating} is a strong guarantee of individual privacy with an additional benefit of encouraging generalization \citep{dwork2015generalization, dwork2015reusable, dwork2015preserving}. Differentially private mechanisms formally quantify the amount of leaked information and can greatly incentivize distributed learning with sensitive data. Another attractive feature of differential privacy is its robustness to post-processing; once the model is learned through a differentially private mechanism, any arbitrary applications of the model as well as post-processing to the model can be performed without worrying about additional privacy leaks.

In this work, we investigate learning decision trees while preserving privacy in the distributed setting \citep{balcan2012distributed}. Decision trees are often used for their efficiency and interpretability. The learned structure of decision trees is much more interpretable than other machine learning models like deep neural networks. Motivated by the fact that constructing optimal decision trees is NP-complete \citep{hyafil76:hardTrees}, greedy top-down learning algorithms are used in practice (i.e. ID3, C4.5 and CART \cite{mitchell1997machine}). Top-down algorithms can be viewed as boosting algorithms that quickly drive their error to zero under the Weak Learning Assumption \citep{kearns1999boosting}.

In the distributed setting, any communicated information between data holders must be made differentially private, which limits the quantity and quality of information that can be published to learn the decision tree. Intermediate computations often cost privacy, which violates the information minimization principle of differential privacy \citep{dwork2014algorithmic}. Despite these constraints, we are able to prove theoretical guarantees that the error of the privately-learned tree quickly goes to zero with high probability provided we have sufficient data under the Weak Learning Assumption.

We present DP-TopDown, a family of differentially private decision tree learning algorithms based off the TopDown algorithm of \citet{kearns1999boosting}. DP-TopDown uses a differentially private subroutine, called $\PrivateSplit$, to approximate the optimal split chosen by TopDown. Designing a private and distributed mechanism that satisfies the utility requirements of $\PrivateSplit$ is the main challenge of running DP-TopDown.
In our first approach, called \textit{NoisyCounts}, each entity publishes noised counts that are aggregated to find the best splitting functions. Our second approach \textit{LocalRNM} is a heuristic that performs local optimization at each entity to greatly reduce the amount of communication (and thus noise) compared to NoisyCounts.

Finally, we comprehensively evaluate our different implementations of DP-TopDown on real, relevant datasets, and provide plots that illustrate the privacy-accuracy trade-off for our algorithms. Empirically, LocalRNM performs better than NoisyCounts on most datasets. Motivated by our theory, we also show both the training and testing accuracy of the decision tree increases with more training data. In one dataset, we observe that private algorithms actually improve the test accuracy, which is similar in spirit to adding some noise in non-convex optimization problems to escape local optimums \citep{zhang2017hitting}.

Our contributions are summarized as follows:
\begin{enumerate}
    \item We propose DP-TopDown, a differentially private, top-down learning algorithm for decision trees, and present two implementations for the distributed setting.
    \item We prove boosting-based utility guarantees for DP-TopDown. These are the first utility guarantees for differentially private top-down learning of decision trees in both single machine and distributed settings.
    \item We extensively evaluate DP-TopDown on real datasets, \href{https://github.com/kaiwenw/DPDDT}{reproducible with open-sourced code}\footnote{Code available at \url{https://github.com/kaiwenw/DPDDT}}.
\end{enumerate}

\subsection{Related Works}
Previous works have explored private top-down learning of decision trees for specific algorithms. \cite{blum2005practical} introduced and used the SuLQ framework to implement ID3. \cite{friedman2010data,mohammed2011differentially} demonstrated the effectiveness of using the exponential mechanism for private ID3 and C4.5. Our approach generalizes previous works and provide the first utility guarantees and sample complexity bounds for private, top-down decision tree learning, even in the distributed setting.

Apart from top-down decision tree learning (i.e. ID3 and CART), previous works have also considered random decision trees, where the construction of the decision tree is entirely random and can be done even before it sees any data \citep{jagannathan2009practical}. \cite{bojarski2014differentially} studied techniques which averaged over collections of random decision trees (a.k.a. Random Forests) and gave guarantees for empirical and generalization errors. In practice, random forests are powerful models that sometimes out-perform vanilla top-down learned decision trees. However, top-down learning of decision trees performs well in many cases and in general leads to more interpretable trees than randomly constructed decision trees.

Previous works have used boosting to design improved differentially private algorithms for answering batches of queries \citep{dwork2010boosting}. Our work is different and is about designing differentially private boosting algorithms, in particular top-down decision tree learning.

\section{Preliminaries}
We study learning problems where the goal is to map input examples from a space $X$ to a finite output space $Y$. We suppose that there is an unknown distribution $P$ defined over $X$, together with an unknown target function $f : X \to Y$. Given a sample $S$ of i.i.d. pairs $(x, f(x))$ drawn from the distribution $P$, our goal is to learn an approximation to $f$.

We focus on the canonical problem of top-down decision tree learning. In a decision tree, each internal node is labeled by a splitting function $h : X \to \{0,1\}$ that ``splits'' the data according to $h(x)$, and the leaves of the tree are labeled by a classes in $Y$. These splitting functions route each example $x \in X$ to exactly one leaf of the tree. That is, if the root of the tree has splitting function $h$, a point $x$ is routed to the left subtree if $h(x) = 0$ and to the right subtree if $h(x) = 1$. The prediction made by a decision tree $T$ for an input $x$ is the label of the leaf that $x$ is routed to.
We let $H \subset \{0,1\}^X$ denote the class of splitting functions that may be placed at each internal node, and denote the error of a decision tree $T$ by $\varepsilon(T) = \Pr(T(x) \neq f(x))$, where $x$ is sampled from $P$.

\subsection{Top-down Decision Tree Learning}
In this section we describe the high level structure of (non-private) top-down decision tree learning algorithms. Top-down learning algorithms construct decision trees by repeatedly ``splitting'' a leaf node in the tree built so far. On each iteration, the algorithm greedily finds the leaf and splitting function that maximally reduces an upper bound on the error of the tree. The upper bound (see Equation~\eqref{eq:upperBound}) on error is induced by a \emph{splitting criterion} $G : [0,1] \to [0,1]$ that is concave, symmetric about $\frac{1}{2}$, and that satisfies $G(\frac{1}{2}) = 1$, $G(0) = G(1) = 0$. The selected leaf is replaced by an internal node labeled with the chosen splitting function, which partitions the data at the node into two new children leaves. Once the tree is built, the leaves of the tree are labeled by the label of the most common class that reaches the leaf. Many effective and widely-used decision tree learning programs are instances of the TopDown algorithm. For example, C4.5 uses entropy for splitting criterion while CART uses Gini.

The splitting criterion $G$ provides an upper bound on the error of a decision tree $T$. The error $\varepsilon(T)$ can be written as
$ 
    \varepsilon(T) = \sum_{\ell \in \leaves(T)} w(\ell) \min\{q(\ell), 1-q(\ell)\}
$ 
where $w(\ell) = \Pr_P[x \text{ reaches leaf } \ell]$ is the fraction of data that reaches $\ell$ and $q(\ell) = \Pr_P[f(x) = 1 | x \text{ reaches leaf } \ell]$ is the fraction of that data with label 1. For all $z \in [0,1]$, the splitting criterion satisfies $G(z) \geq \min\{z, 1-z\}$, and therefore the following potential function is an upper bound on the error $\varepsilon(T)$ of a decision tree $T$:
\begin{align}
    G(T) &= \sum_{\ell \in \leaves(T)} w(\ell) G(q(\ell)).
    \label{eq:upperBound}
\end{align}
With this, we can formally describe one iteration of top-down decision tree learning. For each leaf $\ell \in \leaves(T)$, let $S_{\ell} \subseteq S$ denote the subset of data that reaches leaf $\ell$. For each splitting function $h \in H$, let $T(\ell,h)$ denote the tree obtained from $T$ by replacing leaf $\ell$ by an internal node that splits $S_{\ell}$ into two children leaves $\ell_0, \ell_1$, where any data satisfying $h(x) = i$ goes to $\ell_i$. At each round, TopDown chooses the split-leaf pair $(h^*, \ell^*)$ that maximizes the decrease in the potential:
$
    G(T) - G(T(\ell,h)) = w(\ell)J(\ell, h)
$
where
$
J(\ell, h) = G(q(\ell)) - \frac{|S_{\ell_0}|}{|S_\ell|}G(q(\ell_0)) - \frac{|S_{\ell_1}|}{|S_\ell|} G(q(\ell_1))
$.

Kearns and Mansour showed that (non-private) top-down decision tree learning can be viewed as a form of boosting \citep{kearns1999boosting}. Whenever the class of splitting functions $H$ used to label internal nodes satisfies the Weak Learning Assumption (i.e., they can predict the target $f$ better than random guessing for any distribution on $X$), then top-down learning algorithms amplify these slight advantages and quickly drives the training error of the tree to zero. Formally, the Weak Learning Assumption is as follows:
\begin{definition}
\label{def:wla}
A function class $H \subset \{0,1\}^X$ $\gamma$-satisfies the \emph{Weak Learning Assumption} for a target $f : X \to \{0,1\}$ if, for any distribution $P$ over $X$, $\exists h \in H, \Pr_{x \sim P}(h(x) \neq f(x)) \leq \frac{1}{2} - \gamma$. $\gamma$ is called the advantage.
\end{definition}

\noindent We restate the result of \cite{kearns1999boosting} when $G$ is entropy: $G(q) = -q\log(q)-(1-q)\log(1-q)$.
\begin{theorem}
\label{thm:kmthm}
Let $G(q)$ be entropy. Under the Weak Learning Assumption, for any $\varepsilon \in (0,1]$, TopDown suffices to make $(1/\varepsilon)^{O(\log(1/\varepsilon)/\gamma^2)}$ splits to achieve training error $\leq \varepsilon$.
\end{theorem}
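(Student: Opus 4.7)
The plan is to track the potential $\Phi_t = G(T_t) = \sum_{\ell \in \leaves(T_t)} w(\ell) G(q(\ell))$ after $t$ splits of TopDown, which by Equation~\eqref{eq:upperBound} upper bounds the training error $\varepsilon(T_t)$. It therefore suffices to bound the number of splits $t$ needed to reach $\Phi_t \leq \varepsilon$. The argument has three pieces: isolating a leaf where we are guaranteed to make substantial progress, translating the weak learner's advantage into a lower bound on the per-split drop in entropy, and solving the resulting recurrence on $\Phi_t$.

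First I would identify a ``heavy'' leaf $\ell^*$ at which to apply weak learning. Since TopDown has $|\leaves(T_t)| = t+1$ leaves after $t$ splits, an averaging argument produces some $\ell^*$ with $w(\ell^*) G(q(\ell^*)) \geq \Phi_t / (t+1)$. Applying Definition~\ref{def:wla} to the conditional distribution of $P$ at $\ell^*$ yields a splitter $h \in H$ whose error against $f$ on data reaching $\ell^*$ is at most $\tfrac{1}{2} - \gamma$. Since TopDown greedily chooses the pair maximizing $w(\ell) J(\ell,h)$, we get the per-round lower bound
\begin{equation*}
    \Phi_t - \Phi_{t+1} \;\geq\; w(\ell^*) \, J(\ell^*, h).
\end{equation*}

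The central technical step is the entropy-specific lemma of \citet{kearns1999boosting}: whenever $G$ is entropy and $h$ is a weak learner with advantage $\gamma$ at leaf $\ell^*$,
\begin{equation*}
    J(\ell^*, h) \;\geq\; C \, \gamma^2 \, G(q(\ell^*))^2
\end{equation*}
for an absolute constant $C > 0$. This follows by expanding the entropy of each child's label distribution to second order about $q(\ell^*)$ (using that entropy is strongly concave away from the endpoints), converting the $\gamma$-advantage into a lower bound on the gap between the children's label frequencies and $q(\ell^*)$, and recombining. Plugging this in and using $w(\ell^*) G(q(\ell^*)) \geq \Phi_t / (t+1)$ gives the recurrence
\begin{equation*}
    \Phi_{t+1} \;\leq\; \Phi_t \;-\; \frac{C \gamma^2 \, \Phi_t^2}{(t+1)^2 \, G(q(\ell^*))} \;\leq\; \Phi_t \;-\; \frac{C \gamma^2 \, \Phi_t^2}{(t+1)^2},
\end{equation*}
using $G(q(\ell^*)) \leq 1$.

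Finally, solving this recurrence (e.g., by comparing with the continuous analogue $\Phi'(t) = -C\gamma^2 \Phi(t)^2/t^2$, or by the inductive argument in Kearns--Mansour) shows $\Phi_t \leq \varepsilon$ once $t = (1/\varepsilon)^{O(\log(1/\varepsilon)/\gamma^2)}$, at which point $\varepsilon(T_t) \leq \Phi_t \leq \varepsilon$ as required. The hardest step is the entropy drop lemma: unlike Gini, the entropy decrease does not factor cleanly, so the second-order expansion must be handled carefully for $q(\ell^*)$ close to $0$ or $1$, and the factor $G(q(\ell^*))^2$ is essential to avoid a trivial bound there. Translating the per-round drop into the stated super-polynomial (but sub-exponential) bound in $1/\varepsilon$ is the other source of subtlety, and is where the $\log(1/\varepsilon)$ factor in the exponent ultimately arises.
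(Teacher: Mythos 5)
Your recurrence is not strong enough to reach a nonzero conclusion, so the final step fails. From $\Phi_{t+1} \leq \Phi_t - C\gamma^2\Phi_t^2/(t+1)^2$, comparing with the continuous analogue $\Phi'(t) = -C\gamma^2\Phi(t)^2/(t+1)^2$ gives $1/\Phi(t) = 1/\Phi(1) + C\gamma^2\bigl(\tfrac{1}{2} - \tfrac{1}{t+1}\bigr)$, so $\Phi(t)$ decreases only to the strictly positive limit $\bigl(1/\Phi(1) + C\gamma^2/2\bigr)^{-1}$ rather than to zero; the discrete recursion is even slower. The source of the trouble is that the bound pays the $1/(t+1)$ averaging factor twice: once as $w(\ell^*)G(q(\ell^*)) \geq \Phi_t/(t+1)$ and once implicitly when lower-bounding $G(q(\ell^*))$ inside the entropy-drop lemma (note that from $J(\ell^*,h) \geq C\gamma^2 G(q(\ell^*))^2$ and $w(\ell^*)G(q(\ell^*)) \geq \Phi_t/(t+1)$ one actually gets $w(\ell^*)J(\ell^*,h) \geq C\gamma^2\,\Phi_t\,G(q(\ell^*))/(t+1)$, and the only generic lower bound on $G(q(\ell^*))$ is again $\Phi_t/(t+1)$, so you cannot avoid the $(t+1)^2$).

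The Kearns--Mansour argument that this paper restates avoids both problems by averaging over the \emph{error} rather than the potential, and by using an entropy-drop lemma that is linear in $\min(q,1-q)$ rather than quadratic in $G(q)$. Concretely, after $t$ splits there is a leaf $\ell$ with $w(\ell)\min(q(\ell),1-q(\ell)) \geq \varepsilon_t/t$, and the weak-learning assumption with entropy gives a split whose contribution is of order $\gamma^2\min(q(\ell),1-q(\ell))$ up to logarithms. Relating $\varepsilon_t$ back to $G_t$ then yields the recurrence $G_{t+1} \leq G_t - \tfrac{\gamma^2 G_t}{4t\log(2/G_t)}$ (as stated in Appendix~\ref{sec:dptopdownproof}), whose solution $G_t \leq e^{-\gamma\sqrt{\log(t)/c}}$ produces the claimed $(1/\varepsilon)^{O(\log(1/\varepsilon)/\gamma^2)}$ split bound. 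Your quoted lemma $J(\ell^*,h) \geq C\gamma^2 G(q(\ell^*))^2$, besides not matching the Kearns--Mansour form, is much weaker near the endpoints: when $q$ is small, $G(q)^2 \approx q^2\log^2(1/q)$ while the actual achievable drop is on the order of $\gamma^2 q$, and this gap is exactly what makes your recurrence stall.
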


\subsection{Differential Privacy}
Differential privacy is the \textit{de facto} notion of privacy \citep{Dwork06:Calibrating}. The formal notion is as follows. We say two datasets $S$ and $S'$ are neighboring if they differ on at most one point. A randomized algorithm $\mathcal{A}$ is $\alpha$-differentially private if for any neighboring datasets $S, S'$ and any set of outcomes $\mathcal{O} \subseteq \Range(\mathcal{A})$,
\begin{align*}
    \Pr(\mathcal{A}(S) \in \mathcal{O}) \leq e^{\alpha} \Pr(\mathcal{A}(S') \in \mathcal{O})
\end{align*}
where the probability space is over the randomness of $\mathcal{A}$.

We recall two standard differentially private mechanisms.
First, the Laplace mechanism (LM) can be used to evaluate a function $f$ mapping datasets to real numbers, while preserving $\alpha$-differential privacy. Given a dataset $S$, the Laplace mechanism with privacy parameter $\alpha$ outputs $f(S) + Z$, where $Z$ is drawn from the Laplace distribution with scale parameter $\Delta_f / \alpha$ and $\Delta_f$ is the \emph{sensitivity} of $f$:
\begin{align*}
    \Delta_f = \max_{S, S' \text{ neighboring datasets}} |f(S) - f(S')|.
\end{align*}
We will also use Report Noisy Max (RNM), an alternative to the exponential mechanism. Given a collection of scoring functions $f_1, \dots, f_k$, each mapping datasets to real-valued scores, RNM privately estimates $i^* = \argmax_i f_i(S)$ and the score $f_{i^*}(S)$. Given a dataset $S$, RNM with privacy parameter $\alpha$ first computes $\hat f_i = f_i(S) + Z_i$, where $Z_i$ is drawn from the Laplace distribution with scale parameter $\max_{i = 1, \dots, k} 2\Delta_{f_i} / \alpha$. Then, RNM outputs $i^* = \argmax_{i} \hat f_i$ together with the estimate $\hat f_{i^*}$. Importantly, RNM does not publish the estimated scores $\hat f_i$ for $i \neq i^*$, allowing it to add less noise than applying $k$ LMs.

Our decision tree learning algorithms will apply LM and RNM many times, with varying privacy parameters. Standard composition theorems, presented in Appendix \ref{sec:DPComposition}, guarantee that the overall algorithms are still $\alpha$-differentially private \citep{Dwork06:Calibrating,mcsherry2009privacy}.

\subsection{Distributed Setting with Privacy Constraints}
In this work, we consider the distributed learning setting with privacy constraints \citep{balcan2012distributed}. We suppose there are $k$ entities (i.e. data holders) and each entity $i \in [k]$ must preserve $\alpha$-differential privacy for their data $S^i$. In general, we make no assumptions on how $S^i$ partitions $S = \bigcup_i S^i$ in terms of size and distribution. An example of this setting is hospitals with private patient data. Patients entrust their sensitive information to be used within their local hospital, but the response to any queries from outside the hospital must preserve privacy on behalf of the patients. Note that this notion is not local differential privacy \citep{warner1965randomized,cormode2018privacy}, where patient data would be perturbed before made available to the hospital.

We remark that distributed learning algorithms can also use cryptographic privacy protocols such as multi-party computation to preserve privacy while learning \citep{evans2018pragmatic}. Assuming reasonable computational limitations of the adversary, cryptographic privacy leaks zero bits of information during the training process. On the other hand, differential privacy is an information theoretic guarantee that bounds the amount of leaked information and is robust to post-processing. In this work, we focus on differential privacy protocols since they tend to be more communication efficient and simpler to implement in practice. 

\section{Differentially Private TopDown}
In this section, we propose a private, top-down decision tree learning algorithm called DP-TopDown that is based off the TopDown algorithm \citep{kearns1999boosting}. The key step of DP-TopDown that depends on the sensitive dataset $S$ is choosing the optimal split for a given leaf $\ell$. Since there are many ways to do this, we encapsulate this with a function called $\PrivateSplit$ that satisfies the following privacy and utility requirements, which are sufficient to prove privacy and utility guarantees for DP-TopDown.

\noindent\textbf{Requirements for $\PrivateSplit$: }
\vspace{0.2em}
\begin{enumerate}
  \item $\PrivateSplit(\ell, \alpha, \delta)$ outputs a pair $(\hat h, \hat J) \in H \times \mathbb{R}$, while preserving $\alpha$-differential privacy;
    \item $\forall \zeta > 0, \exists N := N(\zeta, \alpha, \delta)$ so that if $|S_{\ell}| \geq N$, then $J(\ell,\hat{h}) \geq \max_h J(\ell,h) - \zeta$ and $|J(\ell,\hat{h}) - \hat{J}| \leq \zeta$ with probability $\geq 1-\delta$. In words, this means that $\PrivateSplit$ finds nearly optimal splits w.h.p.
\end{enumerate}
\vspace{0.2em}
Pseudocode for DP-TopDown is given in Algorithm \ref{alg:privateDT}. It takes as input the dataset $S$, the privacy parameter $\alpha$, the max number of nodes (excluding leaves) in the tree $M$, the desired error $\varepsilon \in (0,1]$ and failure probability $\delta \in (0,1]$. We also introduce a function $\mathcal{B}(d)$ that allocates the privacy budget for nodes at depth $d$. The function $\mathcal{B}$ must satisfy $\sum_{d=1}^M \mathcal{B}(d) \leq 1$, which will ensure that the total privacy cost of building the layers of our decision tree does not exceed our privacy budget. These parameters are used in Theorem \ref{thm:dptopdown}.

\begin{algorithm}
\caption{DP-TopDown}
\label{alg:privateDT}
\begin{algorithmic}[1]
  \State \textbf{Input:} dataset $S$, max nodes $M$, error $\varepsilon$, privacy $\alpha$, failure probability $\delta$
    \State Init $T$ to single leaf $\ell$ and $Q$ to max-priority queue
    \State $(\hat{h}, \hat{J}) \gets \PrivateSplit(\ell, \frac{\alpha}{2}\mathcal{B}(1), \frac{\delta}{2(2M+1)})$
    \State Push $(\ell, \hat{h})$ to $Q$ with priority $\hat{J}$
    \For{$t = 1,2,...,M$}
        \State $(\ell^*, h^*) \leftarrow$ Pop element of $Q$ with highest priority
        \State $T \gets T(\ell^*, h^*)$ with children leaves $\ell_0, \ell_1$
        \For{$i = 0,1$}
            \State $\alpha_{\ell} \gets \frac{\alpha}{2}\mathcal{B}(\depth(\ell_i))$ [budget for leaf]
            \State $w_i \leftarrow \frac{|S_{\ell_i}|}{|S|} +
        \Lap(\frac{2}{|S|\alpha_{\ell}})$ [LM with budget $\frac{\alpha_{\ell}}{2}$] \label{line:weightestimate}
            \State $(\hat h_i, \hat J_i) \gets \PrivateSplit(\ell_i, \frac{\alpha_{\ell}}{2}, \frac{\delta}{2(2M+1)})$
            \If{$w_i \geq \frac{\varepsilon}{M}$} \label{line:weightlowerbound}
                \State Push $(\ell_i, \hat h_i)$ to $Q$ with priority $w_i \cdot \hat J_i$
            \EndIf
        \EndFor
    \EndFor
    \State Label leaves by majority label [RNM with budget $\frac{\alpha}{2}$] \label{line:labelleaf}
    \State \Return $T$
\end{algorithmic}
\end{algorithm}

\textbf{Privacy guarantee:}
The privacy analysis follows from the composition theorems. We use $\frac{\alpha}{2}$ for labeling the leaves at line \ref{line:labelleaf} using LM. Since the leaves partition $S$, this preserves $\frac{\alpha}{2}$-differential privacy by parallel composition (Theorem \ref{thm:parallelcomp}). As nodes at depth $d$ also partition $S$, splits at depth $d$ preserve $\mathcal{B}(d)$-differential privacy. Then, by sequential composition (Theorem \ref{thm:seqcomp}), the internal splits together preserve $\frac{\alpha}{2}$-differential privacy if $\sum_{\text{all depths } d} \mathcal{B}(d) \leq 1$. Since the depth of our trees is upper bounded by $M$, the maximum number of non-leaf nodes, it is sufficient for $\sum_{d=1}^M \mathcal{B}(d)$ to be bounded by 1. 

One privacy budgeting approach is to uniformly split the budget across all possible depths with $\mathcal{B}(d) = \frac{1}{M}$. This gives us the tightest sample complexity bounds for Theorem \ref{thm:dptopdown}.
Another approach is to set $\mathcal{B}(d) = \frac{1}{2^{d}}$.
This exponentially decay budgeting is motivated by the intuition that early splits in the tree are more important than later splits. We found that the decay budgeting strategy tends to give higher accuracy than uniform budgeting (see Figure \ref{fig:alpha_trainingfraction_test_acc_paper_fig}).
There is a trade-off between budgeting for leaf labeling and splitting internal nodes, which we explore empirically in Figure \ref{fig:alpha_trainingfraction_test_acc_paper_fig}. Using half the total budget for leaf labeling can be excessive for large $\alpha$, and a better tree could be learned if more budget was allocated to splitting nodes. 

\vspace{0.5em}\textbf{Utility guarantee:}
Intuitively for any fixed $\alpha$, the noise from preserving $\alpha$-differential privacy becomes negligible with high probability as the size of the dataset grows large. The following theorem formalizes this intuition for DP-TopDown by providing general utility guarantees and sample complexity bounds, which are applicable in both the single machine and distributed settings.

\begin{restatable}{theorem}{thmBoosting}
\label{thm:dptopdown}
Let $G(q)$ be entropy. Let $M, \alpha, \varepsilon, \delta$ be inputs to Algorithm \ref{alg:privateDT}. Under the Weak Learning Assumption with advantage $\gamma$, DP-TopDown suffices to make $(1/\varepsilon)^{O(\log(1/\varepsilon)/\gamma^2)}$ splits to achieve training error $\leq \varepsilon$ with probability $\geq 1-\delta$ provided the dataset $S$ has size at least
\[
|S|
\geq \max\left(
  \widetilde O\left(\frac{M}{\gamma^2 \varepsilon \alpha b }\right),
  \frac{2M}{\varepsilon}N\left(\widetilde O(\frac{\gamma^2 \varepsilon}{M}),
  \frac{\alpha b}{4},
  O(\frac{\delta}{M})\right)
\right)
\]
where $b = \min_{1 \leq d \leq M} \mathcal{B}(d)$ and $N(\zeta, \alpha, \delta)$ is the data requirement of $\PrivateSplit$.
\end{restatable}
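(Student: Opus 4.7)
The plan is to reduce to the non-private Kearns--Mansour analysis (Theorem~\ref{thm:kmthm}) by showing that, with probability at least $1-\delta$, every split chosen by DP-TopDown decreases the potential $G(T)$ by almost as much as the optimal split would. The failure budget $\delta$ is split across the two sources of randomness used by DP-TopDown: the $\PrivateSplit$ oracle (called at most $2M+1$ times along the construction) and the Laplace mechanism on the weight estimates $w_i$ (at most $2M$ calls). By union-bounding over these events, each with parameter $\delta/(2(2M+1))$, I get that with probability $\geq 1-\delta$: (i) every returned split $(\hat h,\hat J)$ satisfies $J(\ell,\hat h)\geq \max_h J(\ell,h)-\zeta$ and $|J(\ell,\hat h)-\hat J|\leq\zeta$ for the target $\zeta$ specified below, and (ii) every Laplace perturbation on line~\ref{line:weightestimate} has magnitude at most $\widetilde O\bigl(1/(|S|\alpha b)\bigr)$, where $b=\min_d\mathcal{B}(d)$. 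Condition on this good event for the rest of the argument.

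On the good event, the priority $w_i\cdot\hat J_i$ of each leaf pushed onto $Q$ is within an additive $\widetilde O(\zeta+1/(|S|\alpha b))$ of the true potential gain $w(\ell)J(\ell,h^\star_\ell)$ achievable at that leaf, so popping the max-priority element is equivalent, up to this additive slack, to greedily maximizing the true potential decrease. The weight threshold on line~\ref{line:weightlowerbound} discards any leaf whose noisy weight is below $\varepsilon/M$; I would show this costs at most an additive $\varepsilon$ in training error, since there are at most $M+1$ leaves in total and each discarded leaf contributes weight $O(\varepsilon/M)$, while at the same time ensuring that every leaf on which we actually invoke $\PrivateSplit$ for a real split has $|S_\ell|\geq (\varepsilon/(2M))|S|$. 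Plugging this lower bound into the sample requirement $N(\zeta,\alpha b/4,\delta/(2(2M+1)))$ of $\PrivateSplit$ yields the second term in the claimed bound on $|S|$.

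The remaining step is to pick $\zeta$ small enough that the Kearns--Mansour potential analysis for entropy goes through with essentially the same advantage. Kearns and Mansour require each chosen split to achieve $J(\ell,h)=\Omega(\gamma^2/\log(1/\varepsilon))$ before the potential drops below $\varepsilon$; taking $\zeta=\widetilde O(\gamma^2\varepsilon/M)$ makes the noise a lower-order term in this comparison and keeps the per-step potential decrease a constant fraction of the non-private one. Summing the geometric potential drop over at most $(1/\varepsilon)^{O(\log(1/\varepsilon)/\gamma^2)}$ iterations then gives training error $\leq\varepsilon$, while the requirement $|S|\geq\widetilde O(M/(\gamma^2\varepsilon\alpha b))$ absorbs the Laplace noise on both the weights $w_i$ and the final leaf labels (RNM on line~\ref{line:labelleaf}, which by a standard tail bound misclassifies the majority label at any leaf of weight $\geq\varepsilon/M$ with probability at most $\delta/(2M)$ under this sample size).

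\paragraph{Main obstacle.}
The delicate part is propagating the additive slack $\zeta$ and the weight-noise $\widetilde O(1/(|S|\alpha b))$ through the Kearns--Mansour entropy-based recursion without losing the boosting rate: one needs to verify that a near-optimal split still reduces entropy by $\Omega(\gamma^2\cdot G(q(\ell)))$ rather than an additively degraded amount, and to argue that the greedy choice made using noisy priorities is still compatible with their layered potential argument. This is where the $\widetilde O$ factors and the specific scaling $\zeta=\widetilde O(\gamma^2\varepsilon/M)$ in the theorem statement are pinned down.
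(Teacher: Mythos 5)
Your high-level plan matches the paper's proof quite closely: the same failure-budget split over the $\PrivateSplit$ calls, weight estimates, and leaf labeling; the same choice $\zeta=\widetilde O(\gamma^2\varepsilon/M)$ to absorb the additive slack into the Kearns--Mansour recurrence; and the same derivation of the two terms in the $|S|$ bound (one term to make the weight noise $\leq\zeta$, one to ensure $|S_\ell|\geq\frac{\varepsilon}{2M}|S|\geq N(\cdot)$). The triangle-inequality bound $|w(\ell)J^*-\hat w(\ell)\hat J|=O(\zeta)$ and the modified recurrence $G_{t+1}\leq G_t-\frac{\gamma^2 G_t}{8t\log(2/G_t)}$ are exactly how the paper proceeds.

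However, your treatment of the weight threshold on line~\ref{line:weightlowerbound} has a genuine gap. You argue that discarding leaves with noisy weight below $\varepsilon/M$ ``costs at most an additive $\varepsilon$ in training error, since $\dots$ each discarded leaf contributes weight $O(\varepsilon/M)$.'' That observation about the discarded leaves' total weight does not by itself keep the boosting recurrence alive. The potential $G(T)$ sums over \emph{all} leaves, and the Kearns--Mansour step $G_{t+1}\leq G_t-\Omega\!\left(\frac{\gamma^2 G_t}{t\log(2/G_t)}\right)$ requires that, at every iteration, the pigeonhole leaf $\ell_t$ with $w(\ell_t)\min\{q(\ell_t),1-q(\ell_t)\}\geq \varepsilon_t/t$ be available for splitting. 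If that specific leaf were filtered out by the weight threshold, the recurrence stalls, and you would have no control over the error of the remaining leaves---``discarded leaves are small'' does not bound what happens to the leaves you kept. The argument you actually need, and the one the paper makes, is that $\ell_t$ is \emph{never} discarded: since $\min\{q,1-q\}\leq\frac12$ one gets $w(\ell_t)\geq 2\varepsilon_t/t\geq 2\varepsilon/M$ whenever $\varepsilon_t\geq\varepsilon$, and with $\zeta<\varepsilon/(2M)$ this forces $\hat w(\ell_t)\geq\varepsilon/M$, so $\ell_t$ always passes the threshold. This single inequality is what both preserves the boosting recursion and, as you correctly note, guarantees that every leaf DP-TopDown actually splits has $w(\ell)\geq\varepsilon/(2M)$ so the $\PrivateSplit$ data requirement is met. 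Your sketch could be repaired (e.g., note that if $\ell_t$ were discarded then $w(\ell_t)<\varepsilon/M+\zeta$, which combined with $w(\ell_t)\geq 2\varepsilon_t/t$ already forces $\varepsilon_t<\varepsilon$), but as written it substitutes a weaker, non-load-bearing claim for the key step.
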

\begin{proofsketch}
Let $T_t$ denote the decision tree constructed by DP-TopDown so far at the beginning of the $t^\text{th}$ iteration. The boosting analysis of \cite{kearns1999boosting} has two key steps. First, they argue that during the $t^\text{th}$ iteration, there must be a leaf $\ell$ of the tree $T_t$ whose weight is at least $\varepsilon(T_t)/t$. Next, they leverage the weak-learning assumption to guarantee that the optimal split for that leaf significantly reduces the potential $G_t$. This guarantees that after a small number of iterations, the error cannot be large.

There are two main challenges in extending this analysis to the DP-TopDown algorithm. First, since DP-TopDown only splits leaves whose \emph{estimated} weight is large, there is some risk that it will not consider splitting the leaf used in the analysis of Kearns and Mansour. Second, even if that leaf is considered, the $\PrivateSplit$ algorithm will only produce nearly optimal splits for that leaf when there is enough data reaching that leaf (recall that $\PrivateSplit$ returns an $\zeta$-optimal split with probability at least $1-\delta$, provided that there are at least $N(\zeta, \alpha, \delta)$ data points in the leaf).

The term $\widetilde O(\frac{M}{\gamma^2 \varepsilon \alpha b})$ in our dataset size requirements is chosen to ensure that the error in the estimated weight of every leaf during the run of DP-TopDown is bounded by $\frac{\gamma^2 \varepsilon}{M}$ w.h.p. This guarantees that the leaf used in the analysis of Kearns and Mansour will be considered by DP-TopDown w.h.p. Moreover, every leaf applying $\PrivateSplit$ has non-trivial weight and the error from labeling leaves is less than $\frac{\varepsilon}{2}$.

A consequence of the above argument is that DP-TopDown only runs the $\PrivateSplit$ algorithm on leaves with weight at least $\Omega(\frac{\varepsilon}{M})$. In other words, they contain at least $\Omega(\frac{|S| \varepsilon}{M})$ data points. The term $\frac{2M}{\varepsilon}N\bigl(\widetilde O(\frac{\gamma^2 \varepsilon}{M}), \frac{b\alpha}{2}, O(\frac{\delta}{M})\bigr)$ in our dataset size requirement is chosen to guarantee the number of data points in any leaf that we run $\PrivateSplit$ is large enough to guarantee that it returns a $\zeta$-optimal split where $\zeta$ is at most half the reduction in $G(T)$ guaranteed by the split constructed in the Kearns and Mansour analysis. This ensures that our algorithm reduces the potential $G(T)$ by at least half of the guarantee of the non-private algorithm, which is sufficient to prove our utility guarantee.
\end{proofsketch}

We note that our proof generalizes to other splitting criteria analyzed by \cite{kearns1999boosting}, such as the Gini Index $G(q) = 4q(1-q)$ and $G(q) = \sqrt{2q(1-q)}$ (see Appendix \ref{sec:otherCriterion}).
We also note that the only assumption on $H$ in Theorem \ref{thm:dptopdown} is the Weak Learning Assumption. There is no assumption on the size of $H$. In this paper, we present implementations of $\PrivateSplit$ that iterate through $H$, which is why our corollaries depend on $|H|$ being finite.

\begin{figure*}
\begin{center}
\centerline{\includegraphics[width=0.87\textwidth]{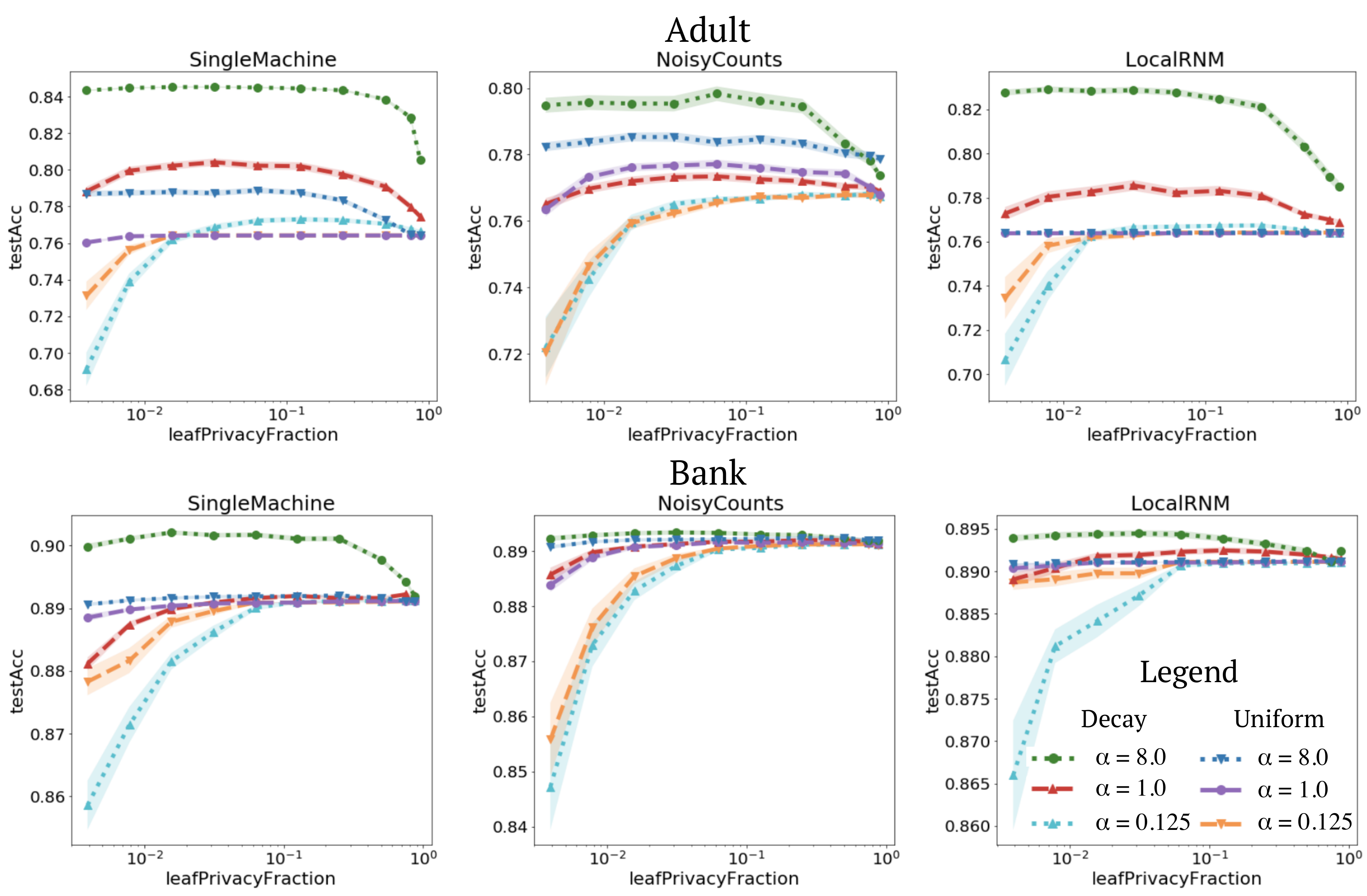}}
\vspace*{-5mm}
\caption{Test accuracy against LPF for decay and uniform budgeting strategies, and varying values of $\alpha \in \{0.125, 1, 8\}$.
}
\label{fig:alpha_trainingfraction_test_acc_paper_fig}
\end{center}
\vskip -0.35in
\end{figure*}

\vspace{0.5em}\noindent\textbf{SingleMachine algorithm: RNM.}
In the single machine setting and when the class of splitting functions $H$ is finite, we can simply use RNM to select an approximately optimal split from $H$. In particular, if we define $f_i(S_\ell) = J(\ell,h_i)$ for $h_i \in H$ and use RNM to choose the optimal split, we satisfy the utility requirement of $\PrivateSplit$ as follows.
\begin{lemma}\label{lm:rnmprivatesplit}
Single machine RNM satisfies the $\PrivateSplit$ requirements with $N(\zeta, \alpha, \delta) = \widetilde{O}(\frac{1}{\alpha\zeta})$.
\end{lemma}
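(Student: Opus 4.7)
The plan is to verify the two requirements of $\PrivateSplit$ by direct appeal to the definition of RNM. Privacy is essentially free: RNM is itself $\alpha$-differentially private whenever its noise is calibrated to the sensitivity of the score functions $f_h(S_\ell) := J(\ell, h)$, so the first requirement is immediate once that sensitivity is bounded.

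The technical core is utility. The first step is to bound the sensitivity $\Delta_J := \max_h \Delta_{f_h}$ with respect to single-point changes in $S_\ell$. Since $J(\ell, h)$ is the weighted combination $G(q(\ell)) - \tfrac{|S_{\ell_0}|}{|S_\ell|} G(q(\ell_0)) - \tfrac{|S_{\ell_1}|}{|S_\ell|} G(q(\ell_1))$, in which both the weights $|S_{\ell_i}|/|S_\ell|$ and the proportions $q(\ell_i)$ shift by $O(1/|S_\ell|)$ under a single-point change, a standard calculation (analogous to the entropy-sensitivity computations in prior private decision-tree work) gives $\Delta_J = O(\log |S_\ell| / |S_\ell|)$ for entropy. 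RNM then adds i.i.d.\ Laplace noise $Z_h \sim \Lap(2\Delta_J/\alpha)$ to each score and returns $\hat h = \argmax_h (J(\ell, h) + Z_h)$ along with the noisy maximum $\hat J = J(\ell, \hat h) + Z_{\hat h}$.

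Next I would invoke the standard tail bound on the maximum of $|H|$ Laplace variables: with probability at least $1-\delta$, $\max_h |Z_h| \leq O\bigl(\Delta_J \alpha^{-1} \log(|H|/\delta)\bigr)$. On this event both utility clauses follow directly. By the usual RNM near-optimality argument, the chosen split obeys $J(\ell, \hat h) \geq \max_h J(\ell, h) - 2\max_h |Z_h|$; and the published score satisfies $|\hat J - J(\ell, \hat h)| = |Z_{\hat h}| \leq \max_h |Z_h|$. Demanding this bound be at most $\zeta$ and substituting the sensitivity estimate yields the sample requirement $|S_\ell| \geq \widetilde{O}(1/(\alpha \zeta))$, with logarithmic factors in $|H|$, $1/\delta$, and $|S_\ell|$ absorbed into the $\widetilde{O}$.

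The main obstacle is the sensitivity calculation: all three terms in $J$, together with the denominators $|S_\ell|$ and $|S_{\ell_i}|$, shift simultaneously when a point is added or removed, and one must control the unbounded slope of $G$ near $0$ and $1$ (which is what forces the $\log |S_\ell|$ factor) to keep the dependence on $|S_\ell|$ tight. Once that bound is in hand, everything else is a direct combination of RNM's built-in privacy guarantee, the Laplace tail inequality, and a union bound over $H$.
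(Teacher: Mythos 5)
Your proposal is correct and follows essentially the same route as the paper: bound the sensitivity of $J(\ell,\cdot)$ as $\widetilde O(1/|S_\ell|)$ for entropy (Lemma~\ref{lm:entropysensitivity}), apply the Laplace max-tail bound with a union over $H$, and then use the standard RNM triangle-inequality argument to convert the uniform noise bound into both the near-optimality of $\hat h$ and the accuracy of $\hat J$. The only cosmetic difference is that the paper makes the inversion from ``noise at most $\zeta/2$'' to an explicit lower bound on $m=|S_\ell|$ precise via the elementary fact that $m\geq 2b\log b$ implies $m\geq b\log m$ (Lemma~\ref{lm:technicallemma}), whereas you absorb the resulting $\log|S_\ell|$ factor into $\widetilde O$; both are fine.
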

\begin{proofsketch}
Lemma~\ref{lm:entropysensitivity} of the Appendix shows that the sensitivity of $f_i(S_\ell) = J(\ell, h_i)$ is bounded by $\widetilde O(1/|S_\ell|)$. Let $\hat f_i = f_i(S) + Z_i$ where $Z_i$ is drawn from the Laplace distribution with scale parameter $O(\alpha / |S_\ell|)$ to each estimate $\hat f_i$. With probability at least $1-\delta$, we have $|Z_i| = O(\frac{\alpha}{|S_\ell|} \log \frac{|H|}{\delta} )$ simultaneously for all $i$. When $|S_\ell| \geq \widetilde \Omega(\frac{1}{\alpha \zeta})$, the error in all estimates made by the RNM mechanism are bounded by $\zeta/2$.
\end{proofsketch}
\begin{corollary}
Under uniform budgeting, using RNM as $\PrivateSplit$ requires $|S| \geq \widetilde{O}(\frac{M^3}{\varepsilon^2 \gamma^2 \alpha})$ for Theorem \ref{thm:dptopdown}.
\end{corollary}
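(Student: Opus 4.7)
The plan is to instantiate the general bound of Theorem \ref{thm:dptopdown} with the specific choices dictated by the corollary: uniform budgeting $\mathcal{B}(d) = 1/M$ and the RNM-based $\PrivateSplit$ from Lemma \ref{lm:rnmprivatesplit}. The proof is essentially a substitution, and the only care needed is to track how the parameters cascade through the two terms inside the max.

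First, under uniform budgeting, $b = \min_{1 \leq d \leq M} \mathcal{B}(d) = 1/M$. Plugging this into the first term of the sample complexity bound in Theorem \ref{thm:dptopdown} gives
\[
\widetilde{O}\left(\frac{M}{\gamma^2 \varepsilon \alpha b}\right) = \widetilde{O}\left(\frac{M^2}{\gamma^2 \varepsilon \alpha}\right).
\]

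Next, for the second term, I would invoke Lemma \ref{lm:rnmprivatesplit} with $\zeta = \widetilde{O}(\gamma^2 \varepsilon / M)$, privacy parameter $\alpha b / 4 = \alpha/(4M)$, and failure probability $O(\delta/M)$. Since Lemma \ref{lm:rnmprivatesplit} gives $N(\zeta,\alpha,\delta) = \widetilde{O}(1/(\alpha\zeta))$, this yields
\[
N\left(\widetilde{O}\left(\tfrac{\gamma^2 \varepsilon}{M}\right), \tfrac{\alpha}{4M}, O\left(\tfrac{\delta}{M}\right)\right) = \widetilde{O}\left(\frac{1}{\tfrac{\alpha}{M} \cdot \tfrac{\gamma^2 \varepsilon}{M}}\right) = \widetilde{O}\left(\frac{M^2}{\alpha \gamma^2 \varepsilon}\right).
\]
Multiplying by the prefactor $2M/\varepsilon$ from Theorem \ref{thm:dptopdown} turns the second term into $\widetilde{O}\bigl(M^3 / (\alpha \gamma^2 \varepsilon^2)\bigr)$.

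Finally, I would take the maximum of the two terms. Since $\varepsilon \in (0,1]$, the second term dominates by a factor of $M/\varepsilon$, giving the claimed overall requirement $|S| \geq \widetilde{O}\bigl(M^3/(\varepsilon^2 \gamma^2 \alpha)\bigr)$. There is no real obstacle here beyond bookkeeping; the only subtlety is making sure the logarithmic factors hidden in the $\widetilde{O}$ of Lemma \ref{lm:rnmprivatesplit} (which depend on $|H|$ and on the inflated failure probability $\delta/M$) compose cleanly with those already absorbed in the $\widetilde{O}$ of Theorem \ref{thm:dptopdown}, which they do since all such factors are polylogarithmic in $M, 1/\varepsilon, 1/\delta, |H|$.
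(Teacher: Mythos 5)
Your proof is correct and follows the natural substitution that the paper clearly intends (the corollary is stated without an explicit proof, as it is meant to be a direct instantiation of Theorem \ref{thm:dptopdown} with $b = 1/M$ and $N(\zeta, \alpha, \delta) = \widetilde{O}(1/(\alpha\zeta))$ from Lemma \ref{lm:rnmprivatesplit}). The bookkeeping is right: the first term becomes $\widetilde{O}(M^2/(\gamma^2 \varepsilon \alpha))$, the second becomes $\widetilde{O}(M^3/(\gamma^2 \varepsilon^2 \alpha))$, and the second dominates since $M/\varepsilon \geq 1$.
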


\noindent We remark that another private implementation is with the exponential mechanism, which recovers the algorithm in \cite{friedman2010data,mohammed2011differentially}.

\vspace{0.5em}\noindent\textbf{Distributed setting with privacy constraint.}
Weight estimation in line \ref{line:weightestimate} can be done distributedly by having each entity estimate the number of data points at a particular leaf with LM. Leaf labeling can be done distributedly by first estimating the number of data points at a particular leaf for each label using LM and then finding the label that maximizes the aggregated counts. The sample complexity of Theorem \ref{thm:dptopdown} accrues a $k$ factor in the first term, as formally quantified in Corollary \ref{thm:distributeddptopdown} of the Appendix. Therefore, we can run DP-TopDown in the distributed setting given a distributed $\PrivateSplit$. We now present two methods.

\vspace{0.5em}\noindent\textbf{Distributed algorithm 1: NoisyCounts.}
For each splitting function $h_i \in H$, each entity $j \in [k]$ uses LM to publish noisy estimated counts $\hat c_{y=a, h_i=b}^{j}$ of the number of datapoints at leaf $\ell$, with label $a$ and splitting value $b$. Then, the coordinator can use the aggregated noisy counts to compute each $\hat J(\ell, h_i)$ and locate the maximum. The following lemma follows from the utility guarantees of LM.
\begin{lemma}\label{lm:noisycountspaper}
NoisyCounts satisfies the $\PrivateSplit$ requirements with $N(\zeta, \alpha, \delta) = \widetilde{O}(\frac{k|H|}{\alpha\zeta})$.
\end{lemma}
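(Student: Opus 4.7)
The plan is to parallel the single-machine RNM analysis of Lemma~\ref{lm:rnmprivatesplit}, accounting for two new sources of error: each of the $k$ entities independently adds Laplace noise, and within each entity the $|H|$ splitting functions must share the per-entity privacy budget. Once the total count error is bounded, the translation to an error bound on $\hat{J}(\ell, h_i)$ follows the same sensitivity argument used in the single-machine setting.

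First I would verify the privacy guarantee at each entity $j \in [k]$. For a fixed splitting function $h_i$, the four counts $c^j_{y=a, h_i=b}$ partition $S^j_\ell$, so by parallel composition (Theorem~\ref{thm:parallelcomp}) releasing all four via LM at noise scale $|H|/\alpha$ preserves $(\alpha/|H|)$-DP. Sequential composition (Theorem~\ref{thm:seqcomp}) across the $|H|$ splitting functions, which all query the same dataset, then yields the desired $\alpha$-DP at entity $j$. A standard Laplace tail bound combined with a union bound over the $4k|H|$ published noisy counts gives $|\hat c^j_{y=a, h_i=b} - c^j_{y=a, h_i=b}| \leq (|H|/\alpha) \log(4k|H|/\delta)$ simultaneously for all counts with probability at least $1-\delta$. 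Summing across entities, every aggregated count satisfies $|\hat c_{y=a, h_i=b} - c_{y=a, h_i=b}| \leq \widetilde{O}(k|H|/\alpha)$ on the same high-probability event.

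Next I translate count errors into errors on $\hat J(\ell, h_i)$. The key ingredient is Lemma~\ref{lm:entropysensitivity}, which shows $J(\ell, h)$ has dataset-sensitivity $\widetilde O(1/|S_\ell|)$. Iterating that bound (equivalently, viewing the count perturbation as adding or removing at most $E = \widetilde O(k|H|/\alpha)$ fictitious datapoints along a path from the true to the noisy aggregated counts) yields $|\hat J(\ell, h_i) - J(\ell, h_i)| \leq \widetilde O(k|H|/(\alpha |S_\ell|))$ simultaneously for all $h_i \in H$. Requiring $|S_\ell| \geq \widetilde O(k|H|/(\alpha\zeta))$ bounds this by $\zeta/2$, and then $\hat h = \argmax_{h_i} \hat J(\ell, h_i)$ satisfies both $J(\ell,\hat h) \geq \max_h J(\ell,h) - \zeta$ and $|\hat J(\ell,\hat h) - J(\ell,\hat h)| \leq \zeta$, which are exactly the $\PrivateSplit$ requirements.

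The main obstacle is the sensitivity-transfer step, since the aggregated noisy counts can fall outside $[0, |S_\ell|]$ and Lemma~\ref{lm:entropysensitivity} is stated for genuine neighboring datasets. I would handle this by clipping each $\hat c_{y=a, h_i=b}$ into $[0, |S_\ell|]$ before evaluating $\hat J$; this is a post-processing step (so the $\alpha$-DP guarantee is preserved), it can only shrink the error, and the sensitivity bound then applies unit-by-unit along the path from the clipped aggregated values to the true counts, accumulating the claimed $\widetilde O(k|H|/(\alpha |S_\ell|))$ bound.
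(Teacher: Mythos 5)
Your proposal follows essentially the same route as the paper's proof: bound the per-count noise with a Laplace tail bound and a union bound over all published noisy counts, translate count accuracy into accuracy of $\hat J(\ell, h_i)$ via the same analysis underlying Lemma~\ref{lm:entropysensitivity}, and conclude near-optimality exactly as in the single-machine RNM argument. Your explicit clipping step is a welcome clean-up of a subtlety the paper leaves implicit, and your budgeting (one histogram query per $h_i$ with the marginals derived by summation) saves a constant factor over the paper's three separate count families, but neither changes the substance of the argument.
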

\begin{proofsketch}
The proof has two main steps. First, we bound the total error in the noisy counts collected by the coordinator, and second we argue that when the counts are sufficiently accurate, then the chosen splitting function is nearly optimal. The collection of counts associated with any splitting function $h_i \in H$ form a histogram query, and they can all be approximated while satisfying $\alpha'$-differential privacy using the Laplace mechanism by adding $\Lap(\frac{1}{\alpha'})$ noise to each count. We approximate one histogram query for each $h_i \in H$, so we set $\alpha' = \frac{1}{|H|}$ so that the total privacy cost is $\alpha$. Next, using tail bounds for Laplace random variables, we know that the maximum error in any of the counts constructed by a single entity is bounded by $\widetilde O(\frac{|H|}{\alpha})$ w.h.p. Therefore, each accumulated count obtained by the coordinator has at most $\widetilde O(\frac{k |H|}{\alpha})$ error accumulated from the $k$ entities. Finally, when $|S_\ell| \geq \widetilde O(\frac{k|H|}{\alpha \zeta})$, the counts are accurate enough guarantee a $\zeta$-optimal split.
\end{proofsketch}
\begin{corollary}
Under uniform budgeting, using NoisyCounts as $\PrivateSplit$ requires $|S| \geq \widetilde{O}(\frac{M^3k|H|}{\varepsilon^2 \gamma^2 \alpha})$ for Theorem \ref{thm:dptopdown}.
\end{corollary}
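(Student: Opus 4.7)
The plan is to specialize the general sample complexity bound from Theorem \ref{thm:dptopdown} to the case where $\PrivateSplit$ is implemented by NoisyCounts (so its data requirement is $N(\zeta,\alpha,\delta) = \widetilde O(k|H|/(\alpha\zeta))$ by Lemma \ref{lm:noisycountspaper}) and where the privacy budget is allocated uniformly across depths. This is essentially a substitution followed by bookkeeping, so the ``proof'' is really a careful computation of the two terms inside the max in Theorem \ref{thm:dptopdown} and a check of which dominates.

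First I would fix the budgeting: uniform budgeting sets $\mathcal{B}(d) = 1/M$ for each $d \in \{1,\dots,M\}$, so the minimum budget is $b = 1/M$. Plugging this into the first term of Theorem \ref{thm:dptopdown} gives
\[
\widetilde O\!\left(\frac{M}{\gamma^2 \varepsilon \alpha b}\right)
= \widetilde O\!\left(\frac{M^2}{\gamma^2 \varepsilon \alpha}\right).
\]

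Next I would substitute the NoisyCounts guarantee into the second term. With accuracy $\zeta = \widetilde O(\gamma^2 \varepsilon / M)$, privacy parameter $\alpha b / 4 = \alpha/(4M)$, and failure probability $O(\delta/M)$, Lemma \ref{lm:noisycountspaper} yields
\[
N\!\left(\widetilde O\!\bigl(\tfrac{\gamma^2 \varepsilon}{M}\bigr),\; \tfrac{\alpha}{4M},\; O\!\bigl(\tfrac{\delta}{M}\bigr)\right)
= \widetilde O\!\left(\frac{k|H|}{\bigl(\alpha/(4M)\bigr)\cdot\bigl(\gamma^2\varepsilon/M\bigr)}\right)
= \widetilde O\!\left(\frac{k|H|\, M^2}{\alpha \gamma^2 \varepsilon}\right),
\]
so multiplying by the leading $2M/\varepsilon$ factor gives $\widetilde O\!\bigl(k|H|M^3 / (\alpha \gamma^2 \varepsilon^2)\bigr)$.

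Finally I would compare the two terms. The ratio of the second to the first is $\widetilde O(k|H|M/\varepsilon)$, which is at least one in the regime of interest (nontrivial hypothesis class, many splits, small target error). Hence the second term dominates, and Theorem \ref{thm:dptopdown} is satisfied as soon as
\[
|S| \geq \widetilde O\!\left(\frac{M^3 k |H|}{\varepsilon^2 \gamma^2 \alpha}\right),
\]
which is the claimed bound. There is no conceptual obstacle; the only thing to be careful about is consistently tracking the factors of $M$ introduced by $b = 1/M$ both in the noise scale passed to $\PrivateSplit$ and in the outer $2M/\varepsilon$ multiplier, since it is the product of these two $M$-dependencies (together with the $k|H|$ coming from NoisyCounts and the $1/\varepsilon$ from the accuracy requirement) that produces the final $M^3 k |H| / \varepsilon^2$ scaling.
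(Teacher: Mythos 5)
Your substitution is correct and is exactly the computation the paper intends (the corollary is stated without a separate proof). One small point worth flagging: because NoisyCounts is a distributed implementation, the relevant sample-complexity statement is really Corollary~\ref{thm:distributeddptopdown}, in which the first term of the max picks up an extra factor of $k$ from distributed weight estimation and leaf labeling, becoming $\widetilde O\bigl(kM^2/(\gamma^2 \varepsilon \alpha)\bigr)$ under uniform budgeting rather than $\widetilde O\bigl(M^2/(\gamma^2 \varepsilon \alpha)\bigr)$. This does not change the conclusion, since you correctly observe that the second term dominates by a factor $\widetilde O(M|H|/\varepsilon) \geq 1$ (and hence by $\widetilde O(M|H|/(k\varepsilon))$ as well, under the reasonable assumption that $|H|$ is not small relative to $k\varepsilon/M$), so the final bound $|S| \geq \widetilde{O}\bigl(M^3 k|H| / (\varepsilon^2 \gamma^2 \alpha)\bigr)$ stands.
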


\noindent Compared to single machine, NoisyCounts' bound on $N$ increased by a $k|H|$ factor. This is since each entity publishes all the counts so that $\hat J(\ell, h_i)$ can be approximated for every $h_i \in H$ whereas only the maximum is needed. Therefore, each of the $k$ entities calls LM with $|H|$ factor more noise. Unlike weight estimation and leaf labeling, which could extend to the distributed setting while accruing only a $k$ factor, NoisyCounts accrues an additional $|H|$ factor since there are $|H|$ splitting functions to try. This is problematic since $|H|$ may be much larger than $k$ in general.

\vspace{0.5em}\noindent\textbf{Distributed algorithm 2: LocalRNM.}
To reduce communication and to increase the privacy budget for each data query, we propose the following heuristic improvement. With budget $\frac{\alpha}{2}$, each entity $i$ uses RNM to find the locally-best splitting function $\hat{h}^*_i$ on their own data $S_{\ell}^{i}$. Then, with budget $\frac{\alpha}{2}$, the coordinator runs NoisyCounts to find the best splitting function over the set of locally-best splitting functions $\hat H = \{\hat{h}^*_i\}_{i \in [k]}$ rather than the entire $H$.

\vspace{0.5em}Here, the noise of each data query scales with the number of entities $k$ instead of the number of splitting functions $|H|$. This is in general much less than the noise added by NoisyCounts. However, locally-best splitting functions, especially from entities with small datasets, can have poor performance across the union of data, which is why LocalRNM does not satisfy the utility requirements of $\PrivateSplit$ in general. Nevertheless, LocalRNM performs noticeably better than NoisyCounts on four out of seven datasets. This suggests that at least one of the locally-best splitting functions tend to be good enough in practice.

\section{Experimental Evaluation}
\label{sec:results}

\begin{figure*}
\begin{center}
\centerline{\includegraphics[width=1\textwidth]{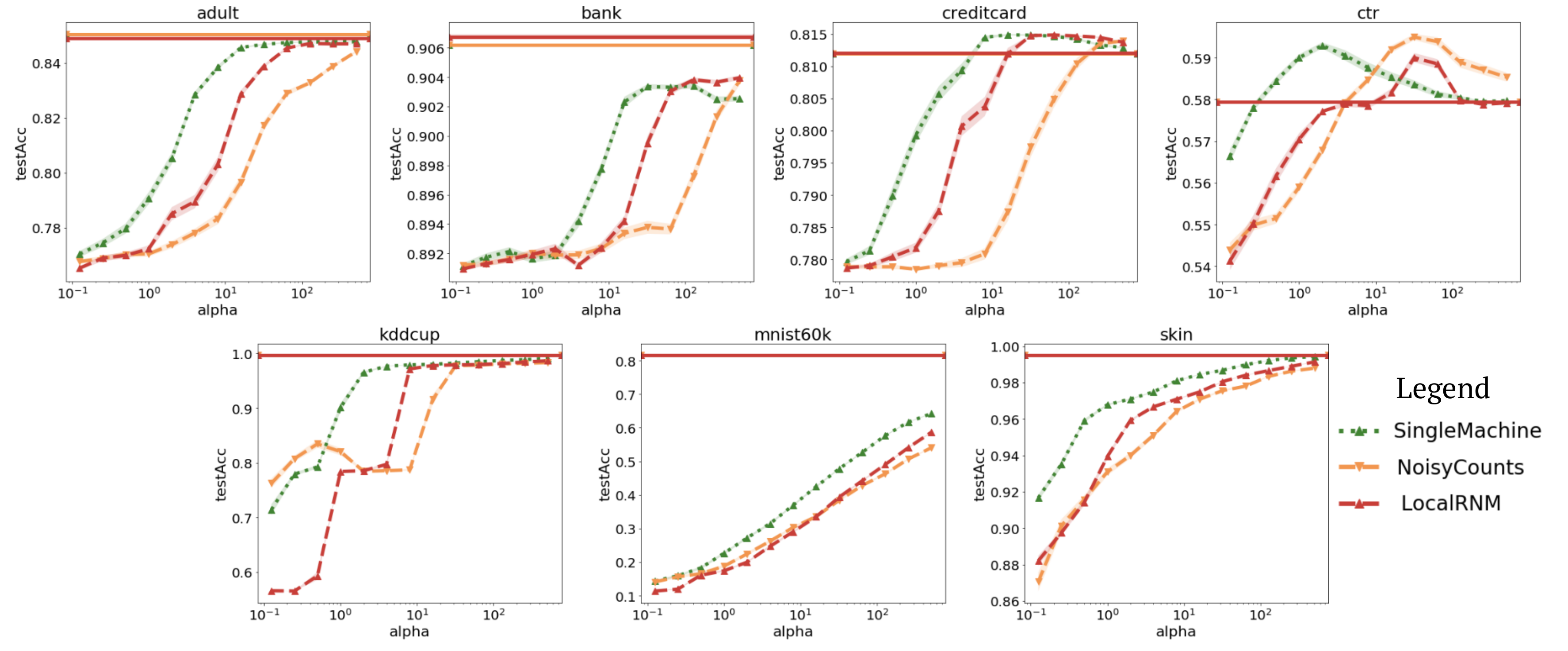}}
\vspace*{-5mm}
\caption{Test accuracy against $\alpha$ for SingleMachine, NoisyCounts and LocalRNM. Baseline runs without noise are shown as solid lines.
}
\label{fig:algos_test_acc_all}
\end{center}
\vskip -0.35in
\end{figure*}

\begin{figure*}[ht]
\begin{center}
\centerline{\includegraphics[width=0.93\textwidth]{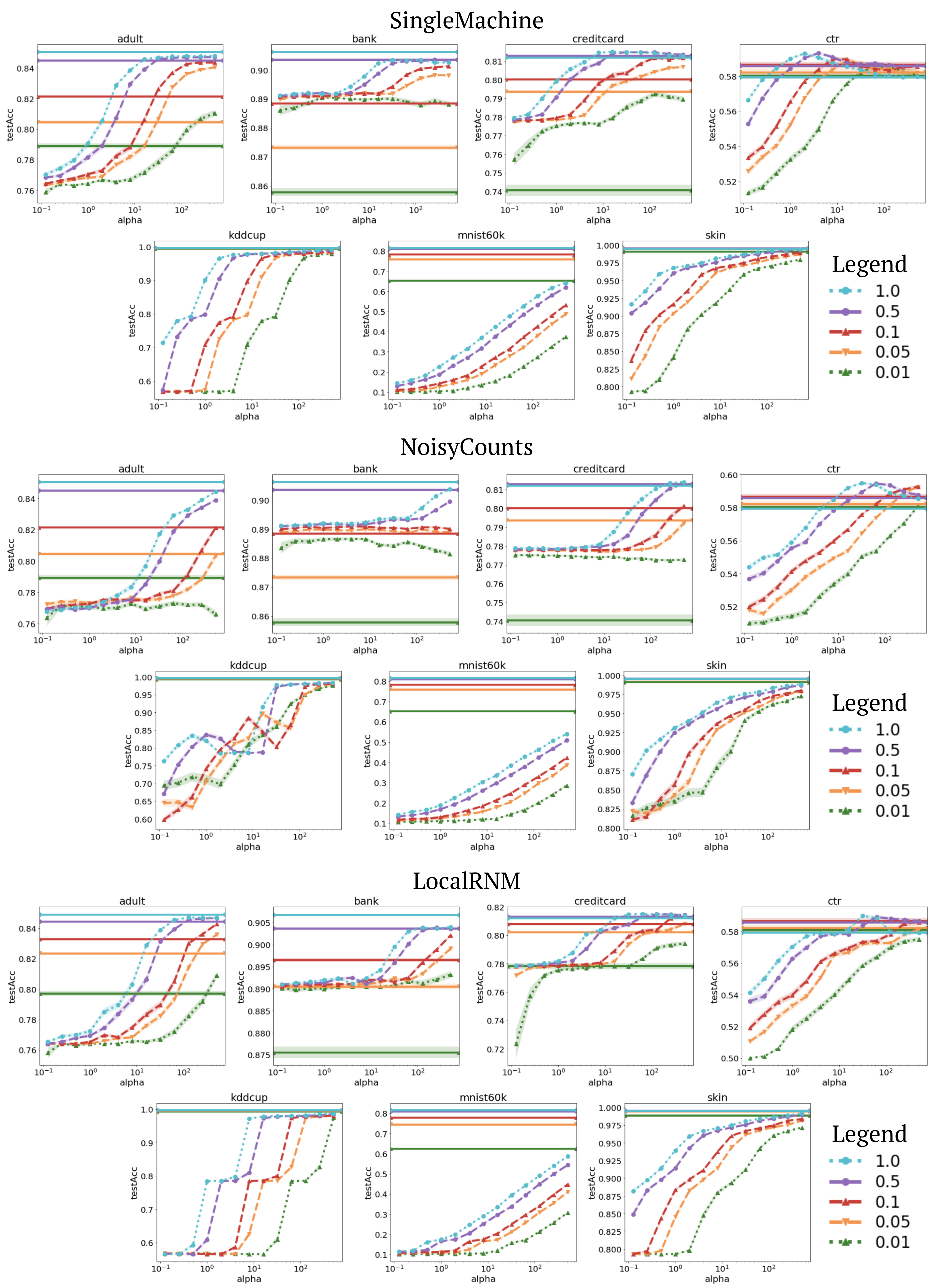}}
\vspace*{-5mm}
\caption{
Test accuracy against $\alpha$ for differently sized training sets. For example, a 0.01 curve were trained on a random $1\%$ subset of the total dataset. Baseline runs without noise are solid lines in the same color as corresponding dotted lines, representing noised runs. }
\label{fig:datasize_test_acc_all}
\end{center}
\vskip -0.3in
\end{figure*}

We evaluated and compared our three DP-TopDown algorithms on seven datasets exhibiting many applications where privacy could be of concern, especially in the distributed setting. We one-hot encoded categorical features. The splitting class for each dataset comprised of evenly spaced thresholds for each feature, with just one threshold at $0.5$ for the one-hot encoded features. We now give a brief description of each dataset.

\vspace{0.5em}\noindent
\textbf{Adult}: US Census Data to predict if one's income exceeds $\$50K$ \citep{Dua:2019}. Total size: $32,561$. \\
\textbf{Bank}: Predict subscription behavior from marketing campaign info \citep{moro2014data}. Total size: $45,211$. \\
\textbf{KDDCup 1999}: Classify network connections as malicious or normal \citep{cup1999dataset}. Total size: $494,021$. \\
\textbf{MNIST}: Classify handwritten digit from $28 \times 28$ greyscale image \citep{lecun1998mnist}. Total size: $70,000$. \\
\textbf{Creditcard}: Client payment information to predict default outcome \citep{yeh2009comparisons}. Total size: $30,000$. \\
\textbf{Avazu CTR}: From anonymized mobile ad info, predict if ad was clicked \citep{CRTDataset}. Total size: $1,100,000$. \\
\textbf{Skin}: Classify RGB pixels of image as skin or non-skin \citep{bhatt2010skin}. Total size: 245,057.
\vspace{1.5em}

MNIST was divided into fixed train/test sets with size ratio $6:1$. The other datasets were $9:1$. Adult, Bank, Creditcard, KDDCup used 10 evenly spaced thresholds for continuous features, giving $|H| = 159, 115, 210, 427$ respectively. Skin used 32 evenly spaced thresholds, giving $|H| = 96$. MNIST, CTR had $|H| = 147, 263$ respectively, described in Appendix \ref{sec:splittingclassdesc}.
We used entropy splitting criterion, $M=512$, $\varepsilon=0.1$ and $\alpha \in \{2^{-3}, 2^{-2}, ..., 2^9\}$. Unless stated otherwise, decay budgeting was used. Data was divided among four entities uniformly randomly. When $\hat J \leq 0.01$, we did not push $\ell$ into $Q$, as it suggested $\ell$ was mostly homogeneous or $\hat J$ was too noisy. Our results were averaged over 100 independent runs on AWS Batch. Shaded error bars denote one standard deviation in the mean, which is tiny in most plots.

Figure \ref{fig:alpha_trainingfraction_test_acc_paper_fig} shows the budgeting trade-off between leaf labeling and internal splits on Adult and Bank. The x-axis is LeafPrivacyFraction (LPF), the fraction of $\alpha$ allocated for leaf labeling. As LPF approaches 1, leaf labeling becomes more accurate but the internal splits become more random. However, both need to be sufficiently accurate to learn an accurate tree, hence why performance suffers as LPF approaches 0 or 1.
As LPF approaches 0, accuracy degrades more with the decay budget than the uniform budget, which suggests the decay budget is more sensitive to noisy leaf labels.
In practice, searching for an optimal LPF consumes privacy, so we fixed $LPF=0.5$ in other experiments. Appendix \ref{sec:more_results_for_alpha_trainingfraction_fig} presents results on other datasets.

Figure \ref{fig:algos_test_acc_all} shows the trade-off between privacy and test accuracy (i.e. privacy curves) for all three private DP-TopDown algorithms presented. The (dotted) accuracy of private algorithms increases as a function of privacy parameter $\alpha$ and eventually converges to the (solid) non-private baseline with no noise. Overall, single machine performs the best. In the distributed setting, LocalRNM, while lacking utility guarantees, performs consistently better than NoisyCounts on Adult, Bank, Creditcard and Skin. Both trends illustrate our intuition that injecting noise should worsen performance.

Counter-intuitively, the private algorithms yielded higher test accuracy than their non-private counterparts on CTR for some $\alpha$. We found that non-private decision trees had higher error due to some unlucky greedy choices, which were avoided by adding small amounts of noise through private mechanisms. Our result shows that injecting noise into TopDown, prone to overfitting, can give better accuracy. This is similar to adding noise in non-convex optimization problems to escape local optima \citep{zhang2017hitting}.
When designing private greedy algorithms, there is trade-off not only between privacy and accuracy, but also generalization.

Figure \ref{fig:datasize_test_acc_all} shows the effects of training dataset size on the test accuracy privacy curves. The curve shapes are regular and larger training sizes consistently result in better test accuracy, which confirms our intuition from Theorem \ref{thm:dptopdown}, that DP-TopDown enjoys the same utility guarantees as TopDown when trained on large datasets. We also observe that smaller dataset sizes generally lead to higher variance in the error bars, which is expected since the injected noise has greater relative affect to the data queries. For small datasets, the benefit of injecting noise for generalization is now very clear. For example, NoisyCount's Bank plot shows that (dotted) private curves of 0.01 and 0.05 consistently outperform the (solid) non-private baselines.

Appendix \ref{sec:more_results_for_algos_test_acc_all}, \ref{sec:more_results_for_datasize_test_acc_all} presents more interesting results. Figure \ref{fig:datasize_train_acc_all} shows that training on larger datasets also leads to higher training accuracy, which is counter-intuitive since smaller datasets are easier to overfit. 

\section{Conclusion}
In this paper, we unify the theory and practice of differentially private top-down decision tree learning in the distributed setting. We provide the first utility guarantees for such algorithms as well as a comprehensive experimental analysis on seven relevant datasets. Our design of DP-TopDown reduces the problem of private top-down learning to the key challenge of approximating optimal splits with $\PrivateSplit$. Using Theorem \ref{thm:dptopdown}, it is easy to derive utility guarantees for other implementations of $\PrivateSplit$. With the code that will be made available, this is a great starting point for applying scalable and provably accurate differentially private decision trees in the distributed setting.

\noindent\textbf{Acknowledgements}:
This work was supported in part by NSF grants CCF-1535967, IIS-1618714,  CCF-1910321, SES-1919453, and an Amazon Research Award.

\clearpage
\bibliography{main}

\appendix
\section{Appendix}

\subsection{Composition Theorems of Differential Privacy}
\label{sec:DPComposition}
\begin{theorem}[Sequential Composition] \label{thm:seqcomp}
Let $A_1, \dots, A_k$ be $k$ mechanisms such that $A_i$ satisfies $\alpha_i$-differential privacy. Then the mechanism $A(S) = (A_1(S), \dots, A_k(S))$ satisfies $\left(\sum_i \alpha_i\right)$-differential privacy. This holds even if $A_i$ is chosen based on the outputs of $A_1(S), \dots, A_{i-1}(S)$.
\end{theorem}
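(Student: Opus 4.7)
The plan is to prove the theorem by induction on $k$, where the main work is in the base case $k=2$ (the $k$-fold case then follows by grouping $A_1, \dots, A_{k-1}$ into one composite mechanism and $A_k$ as the second). For $k=2$, I would fix neighboring datasets $S, S'$ and a measurable outcome set $\mathcal{O} \subseteq \Range(A_1) \times \Range(A_2)$, and show directly that $\Pr(A(S) \in \mathcal{O}) \leq e^{\alpha_1 + \alpha_2} \Pr(A(S') \in \mathcal{O})$. For clarity I would present the discrete case first, then remark that the continuous case is handled identically using densities (or more generally by a Radon--Nikodym derivative argument).

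The key calculation is to condition on the output of $A_1$. Writing $\mathcal{O}_{o_1} = \{o_2 : (o_1, o_2) \in \mathcal{O}\}$ and letting $A_2^{(o_1)}$ denote the (possibly adaptive) second mechanism when $A_1$ outputs $o_1$, I would decompose
\begin{align*}
\Pr(A(S) \in \mathcal{O}) = \sum_{o_1} \Pr(A_1(S) = o_1)\, \Pr\bigl(A_2^{(o_1)}(S) \in \mathcal{O}_{o_1}\bigr).
\end{align*}
By the $\alpha_1$-privacy of $A_1$, each factor $\Pr(A_1(S) = o_1)$ is at most $e^{\alpha_1}\Pr(A_1(S') = o_1)$. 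Crucially, for each fixed $o_1$ the mechanism $A_2^{(o_1)}$ is a \emph{single} $\alpha_2$-differentially private mechanism (its choice was determined by $o_1$ alone, not by looking at $S$), so $\Pr(A_2^{(o_1)}(S) \in \mathcal{O}_{o_1}) \leq e^{\alpha_2}\Pr(A_2^{(o_1)}(S') \in \mathcal{O}_{o_1})$. Multiplying these two bounds termwise and resumming recovers $e^{\alpha_1 + \alpha_2}\Pr(A(S') \in \mathcal{O})$, giving the base case.

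For the inductive step, assuming the result for $k-1$ mechanisms, I would bundle $\tilde A = (A_1, \dots, A_{k-1})$ into a single $(\sum_{i<k} \alpha_i)$-differentially private mechanism by the inductive hypothesis, then apply the $k=2$ case to $\tilde A$ and $A_k$, noting that $A_k$ being chosen based on $\tilde A$'s outputs is exactly the adaptive situation the base case already covers. The main subtlety (and the step I expect to require the most care) is the adaptive selection in the base case: one must verify that conditioning on $o_1$ indeed produces a fixed $\alpha_2$-DP mechanism, rather than one whose DP parameter could secretly depend on $S$ through $o_1$. This is resolved by observing that the $\alpha_2$-DP guarantee is assumed to hold \emph{uniformly} over all admissible outputs $o_1$ that the algorithm designer might branch on, so the bound $e^{\alpha_2}$ applies independently of which $o_1$ was drawn and can safely be pulled outside the sum.
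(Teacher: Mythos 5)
The paper does not prove this theorem; it is stated in the appendix as a standard composition result and cited to Dwork et al.\ and McSherry. Your proof is the standard, correct argument: condition on the output $o_1$ of $A_1$, apply the pointwise $\alpha_1$-DP bound to $\Pr(A_1(S)=o_1)$ and the $\alpha_2$-DP bound to $\Pr(A_2^{(o_1)}(S)\in\mathcal{O}_{o_1})$, multiply and resum, then induct by bundling $A_1,\dots,A_{k-1}$ into a single mechanism. You correctly identify and resolve the only subtle point—that for each \emph{fixed} $o_1$ the adaptively chosen $A_2^{(o_1)}$ is a single $\alpha_2$-DP mechanism, so the factor $e^{\alpha_2}$ is uniform over $o_1$ and can be pulled out of the sum—and you flag the discrete/continuous distinction appropriately. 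No gaps.
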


\begin{theorem}[Parallel Composition]
\label{thm:parallelcomp}
  Let $D_1, \dots, D_k$ be a partition of the input domain and suppose $A_1, \dots, A_k$ are mechanisms such that $A_i$ satisfies $\alpha_i$-differential privacy. Then the mechanism $A(S) = (A_1(S \cap D_1), \dots, A_k(S \cap D_k))$ satisfies $\left(\max_i \alpha_i\right)$-differential privacy.
\end{theorem}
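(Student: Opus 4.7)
The plan is to reduce the claim to the per-coordinate $\alpha_i$-DP guarantees by exploiting two structural facts: (i) a single-element change to $S$ affects exactly one of the sub-datasets $S \cap D_i$, because $D_1, \dots, D_k$ is a partition of the input domain; and (ii) the mechanisms $A_1, \dots, A_k$ act independently on their respective inputs, so the joint output distribution is a product of the individual output distributions.

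First, fix any two neighboring datasets $S$ and $S'$ differing on a single element $x$, and let $j$ be the unique index with $x \in D_j$. Then $S \cap D_i = S' \cap D_i$ for every $i \neq j$, so the distributions of $A_i(S \cap D_i)$ and $A_i(S' \cap D_i)$ coincide on these coordinates. The datasets $S \cap D_j$ and $S' \cap D_j$ are themselves neighbors (differing only at $x$), so the $\alpha_j$-differential privacy of $A_j$ yields $\Pr(A_j(S \cap D_j) \in B) \leq e^{\alpha_j}\Pr(A_j(S' \cap D_j) \in B)$ for every measurable $B$ in the range of $A_j$.

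Next, I would lift this per-coordinate bound to the joint output. Let $\mu_i$ and $\nu_i$ denote the laws of $A_i(S \cap D_i)$ and $A_i(S' \cap D_i)$, respectively. By independence of the component mechanisms, the laws of $A(S)$ and $A(S')$ are the product measures $\mu_1 \times \cdots \times \mu_k$ and $\nu_1 \times \cdots \times \nu_k$. For any measurable event $\mathcal{O}$ in the joint output space, Fubini gives
\[
\Pr(A(S) \in \mathcal{O}) \;=\; \int \mu_j\!\left(\mathcal{O}_{y_{-j}}\right) \, d\mu_{-j}(y_{-j}),
\]
where $y_{-j}$ collects the coordinates other than $j$, $\mu_{-j} = \prod_{i \neq j} \mu_i$, and $\mathcal{O}_{y_{-j}}$ is the $j$-th slice of $\mathcal{O}$ at fixed $y_{-j}$. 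Applying the $\alpha_j$-DP bound pointwise to the inner measure $\mu_j(\mathcal{O}_{y_{-j}})$ and then using $\mu_i = \nu_i$ for $i \neq j$, the right-hand side is at most $e^{\alpha_j}\Pr(A(S') \in \mathcal{O})$. Since $\alpha_j \leq \max_i \alpha_i$, the desired $(\max_i \alpha_i)$-DP guarantee follows.

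The main obstacle is essentially technical bookkeeping rather than a new idea: one must check that slices $\mathcal{O}_{y_{-j}}$ are measurable in the product $\sigma$-algebra (routine), that Fubini applies, and that the DP inequality transfers through the outer integral. In the common case where the outputs lie in a countable space (as for the mechanisms used in this paper after implementation), the argument collapses to a sum over product outcomes and the measure-theoretic step is unnecessary; the same telescoping between product probabilities establishes the bound directly.
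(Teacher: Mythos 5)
The paper does not actually prove this theorem: it appears in the appendix as a standard background fact, cited from prior work (McSherry's parallel composition for PINQ), so there is no ``paper's own proof'' to compare against. Evaluated on its own merits, your argument is the canonical one and it is correct. The two structural observations you isolate---that a single-element change touches exactly one of the blocks $S \cap D_i$, and that the joint output law is a product of the coordinate laws---are exactly what make the bound $\max_i \alpha_i$ rather than $\sum_i \alpha_i$, and your Fubini step correctly lifts the per-coordinate inequality to arbitrary measurable events $\mathcal{O}$ in the product space (with the countable-range shortcut you note being the one relevant for the Laplace/RNM outputs rounded to machine precision, where no measure-theoretic care is needed).

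One convention point worth being explicit about: your claim that a neighboring change touches only one block relies on the add/remove notion of neighboring datasets (symmetric difference at most one). Under the replace convention, swapping a record $x \in D_j$ for a record $x' \in D_{j'}$ with $j \neq j'$ would change two blocks, and the clean $\max_i \alpha_i$ bound would degrade to a sum of two terms. The paper's phrasing ``differ on at most one point'' is ambiguous, and the theorem as stated is only correct under the add/remove reading (or under replace restricted to within a single $D_i$, as in the distributed-hospitals interpretation); it would strengthen your write-up to state which convention you are using at the outset, since the entire argument hinges on it.
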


\subsection{Sensitivity of Entropy}
In this section, we analyze the sensitivity of entropy. We will use the following standard result.
\begin{lemma}
For any $x, y \in (0,1)$, if $|x-y| \leq \frac{1}{2}$, then $|x\lg(x)-y\lg(y)| \leq -|x-y|\lg(|x-y|)$.
\end{lemma}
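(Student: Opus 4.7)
The plan is to work with $\phi(t) := -t \lg t$, extended continuously by $\phi(0) = 0$. Observe that $\phi''(t) = -1/(t \ln 2) < 0$ on $(0,1)$, so $\phi$ is strictly concave on $[0,1]$, and $\phi(0) = \phi(1) = 0$. Without loss of generality assume $x > y$ and set $\delta := x - y \in (0, \tfrac{1}{2}]$; the claim reduces to $|\phi(x) - \phi(y)| \leq \phi(\delta)$.

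The heart of the argument is a concavity-based sandwich. Since $\phi'$ is decreasing, the increment $y \mapsto \phi(y + \delta) - \phi(y)$ is non-increasing in $y$ on $[0, 1-\delta]$ (its derivative in $y$ is $\phi'(y+\delta) - \phi'(y) \leq 0$). Evaluating this increment at the two endpoints and using $\phi(0) = \phi(1) = 0$ gives
\[
-\phi(1-\delta) \;=\; \phi(1) - \phi(1-\delta) \;\leq\; \phi(y+\delta) - \phi(y) \;\leq\; \phi(\delta) - \phi(0) \;=\; \phi(\delta),
\]
so $|\phi(x) - \phi(y)| \leq \max\{\phi(\delta), \phi(1-\delta)\}$.

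It remains to show $\phi(\delta) \geq \phi(1-\delta)$ for $\delta \in (0, \tfrac{1}{2}]$, which is precisely where the hypothesis $|x-y| \leq \tfrac{1}{2}$ is needed. I would split based on the location of $\phi$'s unique maximum at $t = 1/e$. If $\delta \in [1/e,\, 1/2]$, then $\delta \leq 1-\delta$ and both lie in $[1/e, 1]$ where $\phi$ is decreasing, so $\phi(\delta) \geq \phi(1-\delta)$ immediately. If $\delta \in (0, 1/e)$, I would apply the elementary inequality $\ln(1-\delta) \geq -\delta/(1-\delta)$ (a rephrasing of $\ln u \leq u-1$ at $u = 1/(1-\delta)$) to obtain $\phi(1-\delta) \leq \delta \lg e$, while $\phi(\delta) = \delta \lg(1/\delta) \geq \delta \lg e$ because $\delta \leq 1/e$.

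The main obstacle is this final asymmetric comparison: because $\phi$ peaks at $1/e \neq 1/2$, a pure symmetry argument is unavailable, and each of the two subcases needs its own small tool (monotonicity on the decreasing branch, or a log inequality). This is also exactly where the cutoff $|x-y| \leq \tfrac{1}{2}$ is tight—for larger $\delta$ the reflected bound $\phi(1-\delta)$ can exceed $\phi(\delta)$ (e.g.\ $\delta = 0.9$ gives $\phi(0.9) < \phi(0.1)$), so picking $y$ near $1-\delta$ and $x$ near $1$ would violate the stated lemma.
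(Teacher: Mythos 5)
The paper states this lemma as ``the following standard result'' and gives no proof, so there is nothing to compare against directly; I can only assess your argument on its own terms, and it is correct. Your structure is clean: after reducing to $\phi(t) = -t\lg t$ (so the claim becomes $|\phi(x)-\phi(y)| \le \phi(\delta)$ with $\delta = |x-y| \le \tfrac12$), you use concavity of $\phi$ to show the increment $y \mapsto \phi(y+\delta)-\phi(y)$ is non-increasing, pin it between the endpoint values $\phi(\delta)$ and $-\phi(1-\delta)$, and then establish the asymmetric inequality $\phi(\delta) \ge \phi(1-\delta)$ for $\delta \in (0,\tfrac12]$.

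Each piece checks out. The subcase $\delta \in [1/e, 1/2]$ is handled correctly since $\phi$ is decreasing on $[1/e,1]$ and $\delta \le 1-\delta$. For $\delta \in (0,1/e)$, your bound $\ln(1-\delta) \ge -\delta/(1-\delta)$ gives $\phi(1-\delta) = -(1-\delta)\lg(1-\delta) \le \delta \lg e$, while $\phi(\delta) = \delta\lg(1/\delta) \ge \delta \lg e$ because $\delta \le 1/e$. One small remark worth making explicit if you write this up: the upper-bound half of your sandwich, $\phi(y+\delta)-\phi(y) \le \phi(\delta)$, is just subadditivity of the concave function $\phi$ with $\phi(0)=0$ and holds for all $\delta \in (0,1)$; the hypothesis $\delta \le \tfrac12$ is only needed for the lower-bound half, i.e.\ to control $\phi(1-\delta)$. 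Your closing sanity check with $\delta = 0.9$ correctly shows the constraint is not superfluous.
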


We now deduce the global sensitivity of $J$, used in our implementation of PrivateSplit with RNM.
In the proof below, we use the notation $J(S,h)$ where $S$ is a dataset, which is more general than $J(\ell, h)$ (equivalently $J(S_\ell, h)$) for leaf $\ell$.
\begin{lemma}
\label{lm:entropysensitivity}
Suppose $G(q) = -q\lg(q) - (1-q)\lg(1-q)$, the entropy function. When applied to a dataset of size $m$, then for any $h \in H$, the sensitivity of $J'(S) = J(S, h)$ is $\Delta J' \leq \frac{2}{m}(3\lg(m) + 1)$.
\end{lemma}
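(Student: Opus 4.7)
The plan is to rewrite $J(S,h)$ as a signed sum of terms of the form $x \lg x$, where each $x$ is a frequency in $[0,1]$ derived from the dataset, and then to bound the change in each such term using the preceding lemma (i.e., $|x \lg x - y \lg y| \leq -|x-y|\lg|x-y|$ whenever $|x-y| \leq 1/2$).

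First, I would introduce the counts $n_b^y = |\{i \in S : h(x_i) = b, f(x_i) = y\}|$ for $b, y \in \{0,1\}$, together with the associated frequencies $p_b^y = n_b^y/m$, $p^y = p_0^y + p_1^y$, and $r_b = p_b^0 + p_b^1$. A short calculation, using the identity $r_b \, G(q(S_b)) = -p_b^0 \lg p_b^0 - p_b^1 \lg p_b^1 + r_b \lg r_b$ (with the convention $0 \lg 0 = 0$) and the fact that $q(S) = p^1$, turns the definition of $J$ into the compact form
\[
J(S, h) \;=\; -\sum_{y} p^y \lg p^y \;+\; \sum_{b,y} p_b^y \lg p_b^y \;-\; \sum_{b} r_b \lg r_b,
\]
so that $J$ is a signed combination of eight terms of the form $x \lg x$.

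Next, I would observe that for any neighboring dataset $S'$ (differing from $S$ in a single point), every frequency $p^y, p_b^y, r_b$ shifts by at most $1/m$ in absolute value, and at most six of the eight frequencies in the display above can change simultaneously. The worst case is when the swapped points disagree on both the $h$-value and the label, in which case $p^0, p^1, r_0, r_1$ together with two of the $p_b^y$ all move by $1/m$. Applying the preceding lemma with $|x-x'| = 1/m$ (valid once $m \geq 2$) yields $|x \lg x - x' \lg x'| \leq (1/m)\lg m$ per affected term, so the triangle inequality delivers $|J(S,h) - J(S',h)| \leq 6 \lg(m)/m$, which matches the leading $6\lg(m)/m$ contribution in the claimed bound $\frac{2}{m}(3 \lg m + 1)$.

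The residual $2/m$ of slack absorbs two edge contributions: the small-$m$ regime where $1/m > 1/2$ (so the preceding lemma's hypothesis fails and one uses the trivial bound $|x \lg x - y \lg y| \leq 1$ directly), and the degenerate boundary cases where a frequency starts or ends at exactly $0$ or $1$, so that the preceding lemma is invoked with one endpoint at the boundary of its stated domain and an additive $O(1/m)$ per term is picked up. I expect the main obstacle to be the bookkeeping: verifying the signed-sum identity for $J$, and then carefully enumerating which of the eight frequencies move simultaneously under a one-point perturbation, so as to confirm that the worst case contributes exactly six changing terms and not more.
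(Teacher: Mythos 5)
Your proof is correct and follows the same strategy as the paper: decompose $J$ into a signed combination of $x\lg x$ terms in the relevant empirical frequencies (your compact eight-term identity is exactly the paper's expansion written cleanly), note that each frequency shifts by at most $1/m$ between neighboring datasets, and finish with the helper bound $|x\lg x - y\lg y| \leq -|x-y|\lg|x-y|$ and the triangle inequality. Your observation that at most six of the eight frequencies move under a single-point swap in fact yields the slightly tighter $6\lg(m)/m$, and the extra slack you reserve for boundary frequencies hitting $0$ or $1$ is not actually needed, since with the convention $0\lg 0 = 0$ the helper bound extends to the closed interval $[0,1]$ by continuity.
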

\begin{proof}
For a dataset $S$ of size $m$ and splitting function $h$, denote $r_{y=i,h=j} = \frac{|\{x \in S| f(x)=i, h(x)=j\}|}{m}, r_{y=i} = \sum_{j \in \{0,1\}} r_{y=i,h=j}$ and similarly for $r_{h=j}$. Since the numerator is a count that changes by at most 1 for neighboring datasets $S$ and $S'$, we have that $|r_*-r'_*| \leq \frac{1}{m}$ (here, $*$ denotes wildcard). Consider an arbitrary $h \in H$ and fix $h$. Then,
\begin{align*}
    &|J(S,h)-J(S',h)| \\
    &= |-[\sum_{a \in \{0,1\}} r_{y=a}\lg(r_{y=a}) - r'_{y=a}\lg(r'_{y=a})] \\
    &+ [ r_{h=0} \sum_{a \in \{0,1\}} r_{y=a,h=0} \lg(r_{y=a,h=0}) \\
    &- r'_{h=0} \sum_{a \in \{0,1\}} r'_{y=a,h=0} \lg(r'_{y=l,h=0}) ] \\
    &+ [ r_{h=1} \sum_{a \in \{0,1\}} r_{y=a,h=1} \lg(r_{y=l,h=1}) \\
    &- r'_{h=1} \sum_{a \in \{0,1\}} r'_{y=a,h=1} \lg(r'_{y=a,h=1}) ] | \\
    &\leq \frac{2}{m} \lg(m) + 2(\frac{1}{m} + \frac{2}{m}\lg(m))
\end{align*}
by triangle inequality, the above lemma, and $-x\log(x)$ is increasing when $x \in (0,e^{-1})$ (assuming $\frac{1}{m} \leq e^{-1}$).
\end{proof}

\subsection{RNM Implements SingleMachine PrivateSplit}
\label{sec:rnmprivatesplit}
We first recall the utility guarantee for Laplace mechanism \citep{dwork2014algorithmic}.
\begin{lemma}
\label{lm:laplaceutility}
If $Y \sim \Lap(b)$ then $\Pr(|Y| \geq \ln(\frac{1}{\delta}) \cdot b) = \delta$. In general, if $Y_1, Y_2, ..., Y_k$ are i.i.d. samples from $\Lap(b)$, then $\Pr(\max_{i} |Y_i| \geq \ln(\frac{k}{\delta}) \cdot b) \leq \delta$.
\end{lemma}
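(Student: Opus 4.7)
The plan is to prove the two parts of Lemma~\ref{lm:laplaceutility} in sequence, since the second follows from the first by a union bound. For the first part, I will work directly with the density of the Laplace distribution, which for $Y \sim \Lap(b)$ is $f(y) = \frac{1}{2b}\exp(-|y|/b)$. By symmetry of $f$ around $0$, the tail probability reduces to a one-sided integral:
\[
\Pr(|Y| \geq t) = 2\int_t^\infty \frac{1}{2b}e^{-y/b}\,dy = e^{-t/b}.
\]
Plugging in $t = \ln(1/\delta)\cdot b$ immediately yields $\Pr(|Y| \geq \ln(1/\delta)\cdot b) = e^{-\ln(1/\delta)} = \delta$, establishing the first claim with equality (not just inequality).

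For the second part, I will apply the union bound over the $k$ random variables $Y_1,\dots,Y_k$. Writing $t = \ln(k/\delta)\cdot b$ and invoking the exact tail bound above on each $Y_i$ gives
\[
\Pr\bigl(\max_{i} |Y_i| \geq t\bigr) \leq \sum_{i=1}^k \Pr(|Y_i| \geq t) = k \cdot e^{-t/b} = k \cdot e^{-\ln(k/\delta)} = \delta,
\]
which is the stated bound. Note the independence assumption is not actually needed for this direction; only the identical marginal distributions matter, since the union bound is distribution-free.

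There is no real obstacle here: the computation is a standard exercise in computing a Laplace tail and applying a union bound. The only minor point worth flagging in the write-up is that the first statement uses equality while the second uses inequality, precisely because the union bound is the only source of slack. This lemma is stated as preparation for bounding the maximum noise across the $|H|$ applications of the Laplace mechanism used in Lemma~\ref{lm:rnmprivatesplit} and the analogous bound across the $k|H|$ applications used in Lemma~\ref{lm:noisycountspaper}, so the key feature needed downstream is the logarithmic dependence of the tail threshold on both $k$ and $1/\delta$, which is what the calculation above produces.
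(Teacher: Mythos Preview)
Your proof is correct; the tail computation $\Pr(|Y|\geq t)=e^{-t/b}$ followed by a union bound is exactly the standard derivation. The paper itself does not prove this lemma but simply cites it from \citet{dwork2014algorithmic}, so there is no alternative approach to compare against.
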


We first restate the RNM algorithm for $(\hat h, \hat J) = \PrivateSplit(\ell, \alpha, \delta)$. For every $i = 1,2,...,|H|$, define $f_i(S_{\ell}) = J(\ell, h_i)$. By Lemma \ref{lm:entropysensitivity}, $\Delta f_i \leq 10\frac{\lg(m)}{m}$. Thus, RNM samples $X_i \sim \Lap(\frac{20\lg(m)}{\alpha m})$, calculates $\hat f_i = f_i(S_{\ell}) + X_i$ and computes $i^* = \arg \max_{i \in [|H|]} \hat{f}_i$, while preserving $\alpha$-differential privacy. Now we show that this procedure finds nearly optimal splits w.h.p.

\begin{lemma}
\label{lm:technicallemma}
Let $b > 0$. If $x \geq 2b\log(b)$, then $x \geq b\log(x)$ \citep{shalev2014understanding}.
\end{lemma}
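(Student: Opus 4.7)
The plan is to analyze the auxiliary function $g(x) = x - b\log(x)$ and show that $g(x) \geq 0$ whenever $x \geq 2b\log(b)$. First I would compute $g'(x) = 1 - b/x$, which shows $g$ is decreasing on $(0,b)$ and increasing on $(b,\infty)$. Hence $g$ attains its global minimum on $(0,\infty)$ at $x = b$, with value $g(b) = b(1 - \log b)$.

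Next I would split into two cases based on the size of $b$. \textbf{Case 1: $b \leq e$.} Here $g(b) = b(1 - \log b) \geq 0$, so $g(x) \geq 0$ holds for every $x > 0$, and the conclusion $x \geq b\log(x)$ is actually true unconditionally (the hypothesis is not needed). \textbf{Case 2: $b > e$.} Then $\log(b) > 1$, so $2b\log(b) > 2b > b$, placing any $x \geq 2b\log(b)$ into the increasing regime of $g$. By monotonicity it therefore suffices to verify $g(2b\log(b)) \geq 0$. A direct calculation gives
\begin{align*}
g\bigl(2b\log(b)\bigr)
&= 2b\log(b) - b\log\bigl(2b\log(b)\bigr) \\
&= b\bigl(\log(b) - \log(2) - \log\log(b)\bigr)
 = b\log\!\left(\frac{b}{2\log(b)}\right),
\end{align*}
which is nonnegative iff $b \geq 2\log(b)$.

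Finally I would discharge this auxiliary inequality $b \geq 2\log(b)$ by a short calculus argument: $h(b) = b - 2\log(b)$ satisfies $h'(b) = 1 - 2/b$, vanishing at $b = 2$, and $h(2) = 2 - 2\log(2) > 0$ is the global minimum on $(0,\infty)$, so $h(b) > 0$ for all $b > 0$. The main obstacle, such as it is, is just choosing a clean case split: once one notices that the global minimum $g(b) = b(1 - \log(b))$ already handles $b \leq e$ with room to spare, the remaining range $b > e$ automatically satisfies $2b\log(b) > b$ and the monotonicity step becomes immediate, so no delicate analysis of an intermediate regime is required.
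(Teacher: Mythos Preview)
Your argument is correct. The paper itself does not prove this lemma at all; it simply cites \citet{shalev2014understanding} and moves on. So your calculus proof via $g(x) = x - b\log x$, the case split at $b = e$, and the reduction to the auxiliary inequality $b \geq 2\log b$ supplies strictly more detail than the paper does.

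Two minor remarks. First, when $0 < b \leq 1$ the hypothesis $x \geq 2b\log(b)$ places no positivity constraint on $x$ (the right side is nonpositive), so you are implicitly assuming $x > 0$ throughout for $\log(x)$ to make sense; this is harmless in context since the lemma is applied with $x$ a sample size, but worth stating. Second, your claim $h(2) = 2 - 2\log 2 > 0$ depends on the base of the logarithm: for natural $\log$ it is strictly positive, while for $\log_2$ it equals $0$. Either way $h(b) \geq 0$ for all $b > 0$, which is all you need, so the conclusion is unaffected.
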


\begin{theorem}
\label{thm:rnmprivatesplit}
The single machine RNM-based procedure described above satisfies: $\forall \zeta > 0, \exists N = \widetilde{O}(\frac{1}{\alpha \zeta})$ so that if $|S_{\ell}| \geq N$, then with probability at least $1-\delta$, we have $J(\ell,\hat{h}) \geq \max_h J(\ell,h) - \zeta$ and $|J(\ell,\hat{h}) - \hat{J}| \leq \zeta$.
\end{theorem}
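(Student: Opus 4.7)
The plan is to condition on a single high-probability event capturing the magnitudes of all Laplace noises, and then deduce both utility claims from standard near-optimal-argmax reasoning. Let $m = |S_\ell|$ and $b = \frac{20 \log(m)}{\alpha m}$ so that $X_i \sim \Lap(b)$. Applying Lemma~\ref{lm:laplaceutility} to the collection $\{X_i\}_{i=1}^{|H|}$, with probability at least $1-\delta$ we have
\[
\max_{i \in [|H|]} |X_i| \;\leq\; E \;:=\; \log\!\Bigl(\tfrac{|H|}{\delta}\Bigr) \cdot \tfrac{20 \log(m)}{\alpha m}.
\]
The rest of the argument proceeds under this event.

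For the score accuracy, observe that $\hat J = \hat f_{i^*} = J(\ell, \hat h) + X_{i^*}$, so $|\hat J - J(\ell,\hat h)| = |X_{i^*}| \leq E$. For near-optimality of the chosen split, let $h^\star = \argmax_{h \in H} J(\ell, h)$ with index $i^\star$. By definition of $i^* = \argmax_i \hat f_i$, we have $\hat f_{i^*} \geq \hat f_{i^\star}$, i.e., $J(\ell,\hat h) + X_{i^*} \geq J(\ell,h^\star) + X_{i^\star}$, which rearranges (using $|X_{i^*}|, |X_{i^\star}| \leq E$) to $J(\ell,\hat h) \geq J(\ell, h^\star) - 2E$. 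Thus it suffices to choose $N$ large enough that $2E \leq \zeta$.

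The final step is to solve this inequality for $m$, and this is where the slight subtlety lies: the required bound has the form
\[
m \;\geq\; \frac{40 \log(|H|/\delta)}{\alpha \zeta}\, \log(m),
\]
which is implicit in $m$ because of the $\log(m)$ sensitivity factor. This is exactly the shape addressed by Lemma~\ref{lm:technicallemma}: setting $\beta = \frac{40 \log(|H|/\delta)}{\alpha\zeta}$, any $m \geq 2\beta \log \beta$ implies $m \geq \beta \log(m)$. Therefore taking
\[
N \;=\; 2\beta \log \beta \;=\; \widetilde O\!\Bigl(\tfrac{1}{\alpha \zeta}\Bigr)
\]
(where the $\widetilde O$ suppresses $\log\log$ factors as well as $\log|H|$ and $\log(1/\delta)$ factors) gives the claimed bound.

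The main obstacle is precisely this self-referential dependence on $m$ through the entropy sensitivity bound $\Delta J = O(\log m / m)$ from Lemma~\ref{lm:entropysensitivity}; without the technical Lemma~\ref{lm:technicallemma} one would have to iterate bounds by hand. Everything else is routine: a union bound over the $|H|$ Laplace tails, and the standard two-sided comparison that turns a maximum-noise bound into an approximate-argmax guarantee.
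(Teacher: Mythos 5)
Your proof is correct and follows essentially the same route as the paper: a union bound over the $|H|$ Laplace tails via Lemma~\ref{lm:laplaceutility}, the standard two-sided perturbation argument for the argmax and the returned score, and Lemma~\ref{lm:technicallemma} to resolve the implicit $m \geq \beta\log m$ constraint arising from the $O(\log m / m)$ sensitivity. The only cosmetic difference is that the paper targets $E \leq \zeta/2$ up front whereas you carry $E$ symbolically and impose $2E \leq \zeta$ at the end.
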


\begin{proof}
We show that a stronger claim holds with high probability. By Lemma \ref{lm:laplaceutility}, $|X_i| = |f_i - \hat f_i| \leq \frac{\zeta}{2}, \forall i \in [|H|]$ holds with probability $1-\delta$ whenever $\ln(\frac{|H|}{\delta})\frac{20\lg(m)}{\alpha m} \leq \frac{\zeta}{2}$ for large enough $m$. By Lemma \ref{lm:technicallemma}, it suffices to have $m \geq 2b\log(b)$ where $b = \ln(\frac{|H|}{\delta}) \cdot \frac{40}{\alpha \zeta}$.

We now show that $|f_i - \hat f_i| \leq \frac{\zeta}{2}, \forall i$ is sufficient. This is equivalent to $|J(\ell, h_i) - \hat J(\ell, h_i)| \leq \frac{\zeta}{2}, \forall i$. In particular, $|J(l, \hat h) - \hat J| \leq \frac{\zeta}{2} \leq \zeta$.
Let $j^* = \arg \max_{i \in [s]} J(\ell, h_i)$. Then, $f_{i^*} \geq f_{j^*} - \zeta$, which is equivalent to $J(\ell, \hat h) \geq \max_h J(\ell, h) - \zeta$. This is because $f_{j^*} - f_{i^*} \leq (f_{j^*}-\hat f_{j^*}) + (\hat f_{j^*} - f_{i^*}) \leq \frac{\zeta}{2} + (\hat f_{i^*}- f_{i^*}) \leq \zeta$.
Therefore $N(\zeta, \alpha, \delta) = \widetilde{O}(\frac{1}{\alpha \zeta})$.
\end{proof}

\subsection{NoisyCounts Implements Distributed PrivateSplit}
\label{sec:dblprivatesplit}
In NoisyCounts, each entity $j$ estimates counts $\hat c_{y=a,h_i=b}^j, \hat c_{y=a}, \hat c_{h_i=b}$ for every splitting function $h_i$ using the LM with $\frac{3\alpha}{|H|}$ privacy budget for each type of count. Since counts have sensitivity of $1$, the noise added for LM is sampled from $\Lap(\frac{|H|}{3\alpha})$. By Theorem \ref{thm:seqcomp}, this entire operation preserves $\alpha$-differential privacy.
The coordinator aggregates the counts by summing over the entities $\hat c_{y=a, h_i=b} = \sum_{j \in [k]} \hat c_{y=a,h_i=b}^j$. Then, the aggregated noisy counts are used to compute each $\hat J(\ell,h_i)$ and locate the maximum.

\begin{theorem}
The NoisyCounts procedure described above satisfies: $\forall \zeta > 0$, $\exists N = \widetilde{O}(\frac{k|H|}{\alpha \zeta})$ so that if $|S_\ell| \geq N$ then with probability at least $1-\delta$, we have $J(\ell, \hat h) \geq \max_h J(\ell, h) - \zeta$ and $|J(\ell, \hat h)-\hat J| \leq \zeta$.
\end{theorem}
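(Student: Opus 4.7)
The plan is to mirror the two-step structure sketched in the main text: first quantify the total error in the aggregated counts that the coordinator receives, and then translate that count-level error into an error bound on $\hat{J}(\ell, h_i)$ for every $h_i \in H$ simultaneously. Once we have a uniform bound $|\hat{J}(\ell, h_i) - J(\ell, h_i)| \le \zeta/2$ for all $i$, the standard argmax-of-perturbed-scores argument (identical to the one used in the proof of Theorem~\ref{thm:rnmprivatesplit}) delivers both conclusions of the statement.

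For the count-error step, I would fix an entity $j$ and a splitting function $h_i$ and note that each of its published counts is formed by adding independent $\Lap(|H|/(3\alpha))$ noise (after unpacking the budgeting description, each of the $O(1)$ count types per $h_i$ gets budget $\Theta(\alpha/|H|)$ by sequential composition across the $|H|$ splitting functions). By Lemma~\ref{lm:laplaceutility} and a union bound over all $O(k|H|)$ noise variables across entities and splitting functions, with probability at least $1-\delta$ every individual noise term has magnitude at most $\widetilde{O}(|H|/\alpha)$. After the coordinator sums the $k$ per-entity counts for any fixed $(y=a, h_i=b)$, the triangle inequality gives an aggregated error of at most $\widetilde{O}(k|H|/\alpha)$ on every aggregated count simultaneously.

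For the translation step, I would show that whenever every aggregated count $\hat{c}$ deviates from the true count $c$ by at most some $E$, we have $|\hat{J}(\ell, h_i) - J(\ell, h_i)| \le \widetilde{O}(E/|S_\ell|)$. This is essentially the same calculation as in Lemma~\ref{lm:entropysensitivity}: the ratios $\hat{c}/|S_\ell|$ differ from the true ratios by at most $E/|S_\ell|$, and entropy satisfies $|p\lg p - p'\lg p'| \le -|p-p'|\lg|p-p'|$, so combining a triangle inequality over the six entropy terms appearing in $J$ yields an overall perturbation of order $\widetilde{O}(E/|S_\ell|)$. Plugging in $E = \widetilde{O}(k|H|/\alpha)$ and demanding this be at most $\zeta/2$ gives the requirement $|S_\ell| \ge \widetilde{O}(k|H|/(\alpha\zeta))$, which is exactly the claimed $N$.

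Finally, conditioning on the high-probability event that $|\hat{J}(\ell, h_i) - J(\ell, h_i)| \le \zeta/2$ for all $i$, let $j^* = \argmax_i J(\ell, h_i)$ and $i^* = \argmax_i \hat{J}(\ell, h_i) = \hat{h}$; then $J(\ell, h_{j^*}) - J(\ell, h_{i^*}) \le (J(\ell, h_{j^*}) - \hat{J}(\ell, h_{j^*})) + (\hat{J}(\ell, h_{i^*}) - J(\ell, h_{i^*})) \le \zeta$, and likewise $|J(\ell, \hat{h}) - \hat{J}| \le \zeta/2 \le \zeta$. The main obstacle is the translation step: one must be careful that the entropy terms $p \lg p$ remain well-behaved even when $p$ is small, which is handled by applying the lemma used in the proof of Lemma~\ref{lm:entropysensitivity} (using that $-x\log x$ is increasing on $(0,e^{-1})$ and bounding the residual contribution when $p$ falls below $1/|S_\ell|$). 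Properly absorbing all the logarithmic factors from this continuity argument and from the Laplace union bound is what yields the $\widetilde{O}$ in the final sample-size requirement.
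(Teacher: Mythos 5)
Your plan matches the paper's proof essentially step for step: bound the per-count Laplace noise with a union bound over all $O(k|H|)$ variables, sum across entities to get aggregate count error $\widetilde{O}(k|H|/\alpha)$, translate to an error bound on $\hat J$ by reusing the entropy-continuity argument from Lemma~\ref{lm:entropysensitivity} (the ``sensitivity'' calculation applied with perturbation $E/|S_\ell|$ rather than $1/|S_\ell|$), and finish with the same argmax argument as Theorem~\ref{thm:rnmprivatesplit}. The only nit is a flipped constant in the Laplace scale (it should be $\Lap(3|H|/\alpha)$ with per-count budget $\alpha/(3|H|)$), but this is absorbed by the $\widetilde{O}$ and the paper's own exposition contains the same inconsistency.
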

\begin{proof}
The idea is that the estimated counts are accurate w.h.p. Then, applying analysis similar to the sensitivity of $J$, we know that inputting these accurately estimated counts produces accurate estimates of $J$.
Any estimated count $\hat c_{y=a, h_i=b}$ is the true count $c_{y=a,h_i=b}$ noised by $X_1 + X_2 + ... + X_k$ where each $X_i \sim \Lap(\frac{3|H|}{\alpha})$. Since w.p. $1-\frac{\delta}{3k|H|}$, we have $|X_i| \leq \ln(\frac{3k|H|}{\delta})\frac{3|H|}{\alpha}$, applying union bound yields that $|\hat c_{y=a,h_i=b} - c_{y=a,h_i=b}| \leq \ln(\frac{3k|H|}{\delta})\frac{3k|H|}{\alpha}$ for every splitting function $h_i$, w.p. $\geq 1-\frac{\delta}{3}$. This is also true for $c_{y=a}$ and $c_{h_i=b}$. Using notation from the proof of Lemma \ref{lm:entropysensitivity}, $|r_* - r_*'| \leq \ln(\frac{3k|H|}{\delta})\frac{3k|H|}{m\alpha}$. The same analysis leads to $|\hat J(\ell, h_i) - J(\ell, h_i)| \leq 10 \ln(\frac{3k|H|}{\delta})\frac{3k|H|}{m\alpha} \ln(\frac{m\alpha}{\ln(\frac{3k|H|}{\delta}) 3k|H|}) \leq 30 \ln(\frac{3k|H|}{\delta})\lg(m)\frac{k|H|}{m\alpha}$, assuming $\ln(\frac{3k|H|}{\delta})\frac{3k|H|}{m\alpha} \in (0,\frac{1}{e})$. Thus, w.p. $\geq 1-\delta$: $|\hat J(\ell, h_i) - J(\ell, h_i)| \leq \frac{\zeta}{2}, \forall i$ when $m \geq 2b\log(b)$ with $b = 60\ln(\frac{3k|H|}{\delta}) \frac{k|H|}{\alpha \zeta}$. This is sufficient, as shown in the proof of Theorem \ref{thm:rnmprivatesplit}.
Therefore, $N(\zeta, \alpha, \delta) = \widetilde{O}(\frac{k|H|}{\alpha \zeta})$.
\end{proof}

\subsection{Proof of Main Theorem \ref{thm:dptopdown}}
\label{sec:dptopdownproof}
Theorem \ref{thm:dptopdown} is our main boosting-based utility guarantee. We first define some notation and recount key arguments used to prove boosting-based utility guarantees for TopDown \citep{kearns1999boosting}.

Define $\varepsilon_t = \varepsilon(T_t)$ and $G_t = G(T_t)$ where $T_t$ is the decision tree at the $t^{th}$ iteration for DP-TopDown. Let $t \in [M]$ be arbitrary but fixed step in the algorithm. After $t$ splits, there exists leaf $\ell$ such that $w(\ell)\min(q(\ell),1-q(\ell)) \geq \frac{\varepsilon_t}{t}$, since the tree $T_t$ has $t$ leaves. By using the weak learning assumption, \cite{kearns1999boosting} shows that splitting the leaf $\ell$ with the optimal splitting function (i.e., the split $h \in H$ that maximizes $J(\ell, h)$) reduces $G_t$ significantly: $G_{t+1} \leq G_t - \frac{\gamma^2 G_t}{4t\log(2/G_t)}$. A solution to this recurrence is given by $G_t \leq e^{-\gamma \sqrt{\log(t)/c}}$ for constant $c$, so it suffices to make $(1/\varepsilon)^{O(\log(1/\varepsilon)/\gamma^2)}$ splits.

Now we proceed with the proof of our main Theorem \ref{thm:dptopdown}. Then, we deduce the distributed version in \ref{thm:distributeddptopdown}.
\thmBoosting*
\begin{proof}
Let $\zeta \in (0, \frac{\varepsilon}{2M})$ be a target accuracy for the guarantees of $\PrivateSplit$. We will set a value for $\zeta$ later in the proof. Recall that $\alpha_{\ell} = \frac{\alpha}{2}\mathcal{B}(\depth(\ell))$ is the privacy budget for $\ell$.
We now case on each event:
\vspace{0.5em}
\begin{itemize}
    \item There are at most $2M+1$ calls to $\PrivateSplit$. By the properties of $\PrivateSplit$, each call has low error when $|S_{\ell}| \geq N(\zeta, \frac{\alpha_{\ell}}{2}, \frac{\delta}{2(2M+1)})$ with probability at least $1-\frac{\delta}{2(2M+1)}$; that is $J(\ell, \hat h) \geq \max_h J(\ell, h) - \zeta$ and $|J(\ell, \hat h) - \hat J| \leq \zeta$. Thus, all the calls have low error w.p. at least $1 - \frac{\delta}{2}$.
    \item There are also $2M$ weight estimates on line \ref{line:weightestimate}, and we want all of them to have error $\zeta$ w.p. at least $1-\frac{\delta}{4}$. So we want the weight estimate has low error $|\hat w(\ell) - w(\ell)| \leq \zeta$ w.p. $1-\frac{\delta}{8M}$. By Lemma \ref{lm:laplaceutility}, we need $|S| \geq \ln(\frac{8M}{\delta}) \frac{2}{\zeta \alpha_\ell}$.
    \item There are also at most $M+1$ leaves. We want to bound the total error of leaf labeling by $\frac{\varepsilon}{2}$ w.p. at least $1-\frac{\delta}{4}$, so bound each leaf's error by $\frac{\varepsilon}{4(M+1)}$ w.p. at least $1 - \frac{\delta}{4(M+1)}$. Thus, even if we mistakenly picked a label other than the most common label, the label we did pick gives error at most $\frac{\varepsilon}{2(M+1)}$ away from the most common label. RNM with budget $\frac{\alpha}{2}$ requires $|S| \geq \ln(\frac{4(M+1)}{\delta})\frac{8(M+1)}{\varepsilon \alpha}$.
\end{itemize}
\vspace{0.5em}
By union bound, all of the above happens with probability at least $1-\delta$, so assume this high probability event for the rest of the proof. Note that since $\zeta < \frac{\varepsilon}{2M}$ and $\alpha_\ell < \alpha$, the sample complexity of leaf labeling is asymptotically bounded by the sample complexity of weight estimates.

Now we show that our algorithm chooses leaves with enough data to satisfy the sample complexity of $\PrivateSplit$. We know after $t$ splits, there exists a leaf $\ell_t$ s.t. $w(\ell_t)\min(q(\ell_t),1-q(\ell_t)) \geq \frac{\varepsilon_t}{t}$. Since $\min(q(\ell_t),1-q(\ell_t)) \leq \frac{1}{2}$, we have $w(\ell_t) > \frac{2\varepsilon_t}{t} \geq \frac{2\varepsilon}{M}$. Restricting $\zeta < \frac{\varepsilon}{2M}$, then $\hat w(\ell_t) \geq w(\ell_t) - \zeta \geq \frac{\varepsilon}{M}$; so $\ell_t$ is pushed onto $Q$, which implies that DP-TopDown will consider splitting this leaf. Furthermore, any leaf $\ell$ in $Q$ satisfies $w(\ell) \geq \frac{\varepsilon}{2M}$ since $\zeta < \frac{\varepsilon}{2M}$. In other words, there is at least one leaf in the queue $Q$ with weight large relative to $\varepsilon$, and no leaves that have very small weight. Therefore, to satisfy the sufficient conditions for $\PrivateSplit$ to achieve low error when run on a leaf $\ell$, we require that $|S_\ell| \geq N(\zeta, \frac{\alpha_\ell}{2}, \frac{\delta}{2(2M+1)})$, which is guaranteed to be satisfied for any leaf in the queue $Q$ when $|S| \geq \frac{2M}{\varepsilon}N(\zeta, \frac{\alpha_\ell}{2}, \frac{\delta}{2(2M+1)})$, since $|S_\ell| \geq \frac{\varepsilon}{2M} |S|$.

Next, we argue that every iteration of DP-TopDown reduces the potential $G(T)$ by at least half of the non-private algorithm. This is due to a combination of the weak learning assumption, together with the fact that for every leaf $\ell \in Q$ we are guaranteed that $\PrivateSplit$ returned a $\zeta$-optimal split for that leaf. Let $J^* = \max_h J(\ell, h)$ and $(\hat h, \hat J)$ be the output of $\PrivateSplit$ run on leaf $\ell$, then
\begin{align*}
    &|w(\ell) J^* - \hat w(\ell) \hat J| \\
    &\leq w(\ell) |J^* - \hat J| + \hat J |w(\ell) - \hat w(\ell)| \\
    &\leq |J^* - J(\ell, \hat h)| + |J(\ell, \hat h) - \hat J| + |w(\ell) - \hat w(\ell)| \\
    &\leq 3\zeta
\end{align*}
where we used that $w(\ell),\hat J$ are bounded by 1. Note that the above is true for any estimated $\hat J$ from $\PrivateSplit$; in particular it's true for the leafs in $Q$. Therefore, after $t$ splits, the leaf $l^*$ we pop from $Q$ has $\hat w \hat J$ nearly as good as the optimal split for leaf $\ell$. Hence, applying the Kearns and Mansour weak-learning analysis, we are guaranteed that $G_{t+1} \leq G_t - \frac{\gamma^2 G_t}{4t\log(2/G_t)} + 6\zeta$. Choose $\zeta$ such that $6\zeta = \frac{\gamma^2\varepsilon}{8M\log(2/\varepsilon)}$. Then we have $G_{t+1} \leq G_t - \frac{\gamma^2 G_t}{8t \log(2/G_t)}$. Then, the solution to the recurrence inequality differs only by a constant. Thus, to achieve error at most $\frac{\varepsilon}{2}$, it suffices to make $(2/\varepsilon)^{O(\log(2/\varepsilon)/\gamma^2)}$ splits, which simplifies to $(1/\varepsilon)^{O(\log(1/\varepsilon)/\gamma^2)}$. As we bounded the leaf labeling error by $\frac{\varepsilon}{2}$, the total error of DP-TopDown is at most $\varepsilon$ as desired.

For the above claim to hold, the size of the dataset needs to be at least
\begin{align*}
|S| &\geq \max_{\ell} \{\ln(\frac{8M}{\delta}) \frac{2}{\zeta \alpha_\ell}, \frac{2M}{\varepsilon}N(\zeta, \frac{\alpha_\ell}{2}, \frac{\delta}{2(2M+1)})\}
\end{align*}
where the maximum is taken over all leaves $\ell$ that are encountered by DP-TopDown (i.e., the leaves of any tree $T_t$). The terms in the above maximum only depend on the leaves $\ell$ through the privacy budget $\alpha_\ell$ assigned to each leaf. Since the learned decision tree has depth at most $M$, we are guaranteed that $\alpha_\ell = \frac{\alpha}{2}\mathcal{B}(\depth{\ell}) \geq \frac{\alpha b}{2}$, where $b = \min_{1 \leq d \leq M} \mathcal{B}(d)$. Substituting this into the above bound on $|S|$ completes the proof.
\end{proof}

We note that the same style of analysis also applies for the splitting criterion $G(q) = 4q(1-q)$ or $G(q) = 2\sqrt{q(1-q)}$ as in \cite{kearns1999boosting}.

\begin{corollary}\label{thm:distributeddptopdown}
Suppose the distributed setting. Given a distributed implementation of $\PrivateSplit$,
DP-TopDown has the guarantees of Theorem \ref{thm:dptopdown} with sample complexity
\[
|S|
\geq \max\left(
  \widetilde O\left(\frac{kM}{\gamma^2 \varepsilon \alpha b }\right),
  \frac{2M}{\varepsilon}N\left(\widetilde O(\frac{\gamma^2 \varepsilon}{M}),
  \frac{\alpha b}{4},
  O(\frac{\delta}{M})\right)
\right).
\]
\end{corollary}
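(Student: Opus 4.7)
The plan is to mirror the proof of Theorem \ref{thm:dptopdown} line by line and track exactly where the distributed setting inflates the sample complexity. Of the three ``cost centers'' identified in that proof (weight estimation on line \ref{line:weightestimate}, leaf labeling on line \ref{line:labelleaf}, and the $O(M)$ calls to $\PrivateSplit$), only the first two change in form. The third is already abstracted through the generic data requirement $N(\zeta,\alpha,\delta)$, so whatever distributed implementation of $\PrivateSplit$ the user supplies simply plugs its own $N$ into the second maximand; the boosting argument, union bound over failure events, and choice of $\zeta = \Theta\bigl(\gamma^2\varepsilon/(M\log(1/\varepsilon))\bigr)$ are all untouched.

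For weight estimation, the distributed implementation has each entity $j\in[k]$ release $|S_{\ell_i}^j|/|S| + Z_j$ with $Z_j \sim \Lap(2/(|S|\alpha_\ell))$, after which the coordinator sums. By parallel composition across the disjoint shards $S^j$, each entity is allowed its own $\alpha_\ell/2$ budget, so the individual noise scale is the same as in the single-machine algorithm. The aggregated noise is $\sum_{j=1}^k Z_j$; applying the standard Laplace tail bound of Lemma \ref{lm:laplaceutility} entrywise and union-bounding across the $k$ entities yields $|\sum_j Z_j| \leq \widetilde{O}(k/(|S|\alpha_\ell))$ with the required per-estimate failure probability $\delta/(8M)$. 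Requiring this to be at most $\zeta$ gives $|S| \geq \widetilde{O}(k/(\zeta\alpha_\ell))$, a factor-$k$ inflation of the single-machine condition. Substituting $\zeta$ and lower-bounding $\alpha_\ell \geq \alpha b/2$ produces $\widetilde O(kM/(\gamma^2\varepsilon\alpha b))$, the first term of the corollary.

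Leaf labeling extends analogously: each entity publishes LM-noised label counts per leaf, parallel composition applies across both entities and leaves, and the coordinator sums and takes the argmax. The total noise in any aggregated label count has the same $k$-inflated $\widetilde{O}(k/(|S|\alpha))$ magnitude, and forcing this below $\varepsilon/(4(M+1))$ (as in the original proof, to bound the total leaf-labeling error by $\varepsilon/2$) yields $|S| \geq \widetilde{O}(kM/(\varepsilon\alpha))$, which is dominated by the inflated weight-estimation bound whenever $\gamma^2 b \leq 1$.

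With the two inflated count bounds absorbed into the first maximand and the $\PrivateSplit$ bound carried over verbatim into the second, the remainder of the argument — that every leaf in $Q$ has true weight $\Omega(\varepsilon/M)$, that each iteration therefore drives $G_{t+1} \leq G_t - \gamma^2 G_t/(8t\log(2/G_t))$, and that $(1/\varepsilon)^{O(\log(1/\varepsilon)/\gamma^2)}$ iterations suffice — is identical to Theorem \ref{thm:dptopdown}. The main piece of care, and really the only nontrivial bookkeeping, is confirming that the $k$-entity union bounds on the Laplace tails still fit within the overall $\delta$ budget after being multiplied by the $O(M)$ levels and the per-level count estimates; this is routine since each contributes only logarithmic factors that are absorbed by $\widetilde{O}(\cdot)$.
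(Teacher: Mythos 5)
Your proposal is correct and matches the paper's own proof essentially step for step: identify that only the weight-estimation and leaf-labeling subroutines change in the distributed setting (while the $\PrivateSplit$ cost is already abstracted through $N$), replace the single Laplace noise term by a sum of $k$ per-entity Laplace noises (with parallel composition keeping the per-entity scale unchanged), union-bound the $k$ tails to incur an extra factor of $k$ in the first maximand, and then reuse the boosting recursion verbatim. The paper phrases the union bound as allocating error $\zeta/k$ and failure $\delta/k$ to each entity, which is the same calculation you do; it also notes the extra factor of $2$ from using LM on two label counts instead of RNM for leaf labeling, a detail you fold into the $\widetilde O(\cdot)$, which is fine.
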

\begin{proof}
We are only concerned with formally quantifying the effects of weight estimation and leaf labeling.
\vspace{0.5em}
\begin{itemize}
    \item Weight estimation can be done distributedly by having each entity publish an estimate of the number of data points at a particular leaf with LM. The budget remains the same as single machine from parallel composition of the entities.
    \item Leaf labeling can be done distributedly by having each entity publish an estimate of the number of data points for each label, with LM. Since we are no longer using RNM, the noise parameter increases by factor of 2 (since there are two possible labels).
\end{itemize}
\vspace{0.5em}
In both cases, instead of bounding the magnitude of one Laplace distributed random variable, we bound the sum of $k$ independent Laplace distributed random variables to be less than some error bound $\zeta$ with failure probability $\delta$. We can do so by bounding each error $\frac{\zeta}{k}$ with failure probability $\frac{\delta}{k}$, which Lemma \ref{lm:laplaceutility} provides the sample complexity for. We see in both cases, we accrue a $k$ factor inside and outside the $\ln$ term. In particular, weight estimation requires $|S| \geq \ln(\frac{8kM}{\delta})\frac{2k}{\zeta \alpha_\ell}$ and leaf labeling requires $|S| \geq \ln(\frac{4k(M+1)}{\delta})\frac{8k(M+1)}{\varepsilon \alpha}$.
Thus, to account for the distributed setting in Theorem \ref{thm:dptopdown}, the first term in max accrues an extra $k$ factor, becoming $\widetilde{O}(\frac{kM}{\gamma^2 \varepsilon \alpha b})$.
\end{proof}

\subsection{Splitting Classes for MNIST and Avazu CTR}
\label{sec:splittingclassdesc}
In this section we describe the splitting class we used for MNIST and Avazu CTR.

\vspace{0.5em}
\noindent \textbf{MNIST: } The splitting class for MNIST consists of three thresholds on the average pixel intensity of the 49 $4 \times 4$ blocks of each $28 \times 28$ image. For example, the smallest threshold on the ``first" block for a flattened MNIST image $x$ would return 1 if $(x[0]+x[1]+x[28]+x[29])/4 \leq 42.5$ and 0 otherwise.

\vspace{0.5em}
\noindent \textbf{Avazu CTR: } The splitting class for Avazu CTR were chosen without a fixed number of thresholds for each feature since the range for each feature fluctuates more than that of the other datasets. The splitting class consists of 7 thresholds on $C1$, $100$ thresholds on $C14$, $4$ thresholds on $C15$ and $C16$, $40$ thresholds on $C17$, $10$ thresholds on $C19$ and $C21$, and $15$ thresholds on $C20$.

\subsection{More Experimental Results for Figure \ref{fig:alpha_trainingfraction_test_acc_paper_fig}}
\label{sec:more_results_for_alpha_trainingfraction_fig}

In Figure \ref{fig:alpha_trainingfraction_test_acc_supp}, we now show the effect of LPF, with the same setup as Figure \ref{fig:alpha_trainingfraction_test_acc_paper_fig}, for the rest of the datasets: CreditCard, Skin, MNIST60K, KDDCup, CTR. We see the same overall trend that performance is highest for the non-boundary values of LPF (i.e. when LPF isn't close to 0 or 1).
However, the effect of altering LPF is noticeably different on each combination of dataset, algorithm, $\alpha$ and budgeting function. For instance, on LocalRNM with KDDCup, $\alpha=8.0$ and $\alpha=1.0$ with decay budgeting saw large improvement in performance when LPF was decreased to $0.5$, while the other lines in the plot remained with poor performance for all our values of LPF.

\begin{figure*}
\begin{center}
\centerline{\includegraphics[width=0.8\textwidth]{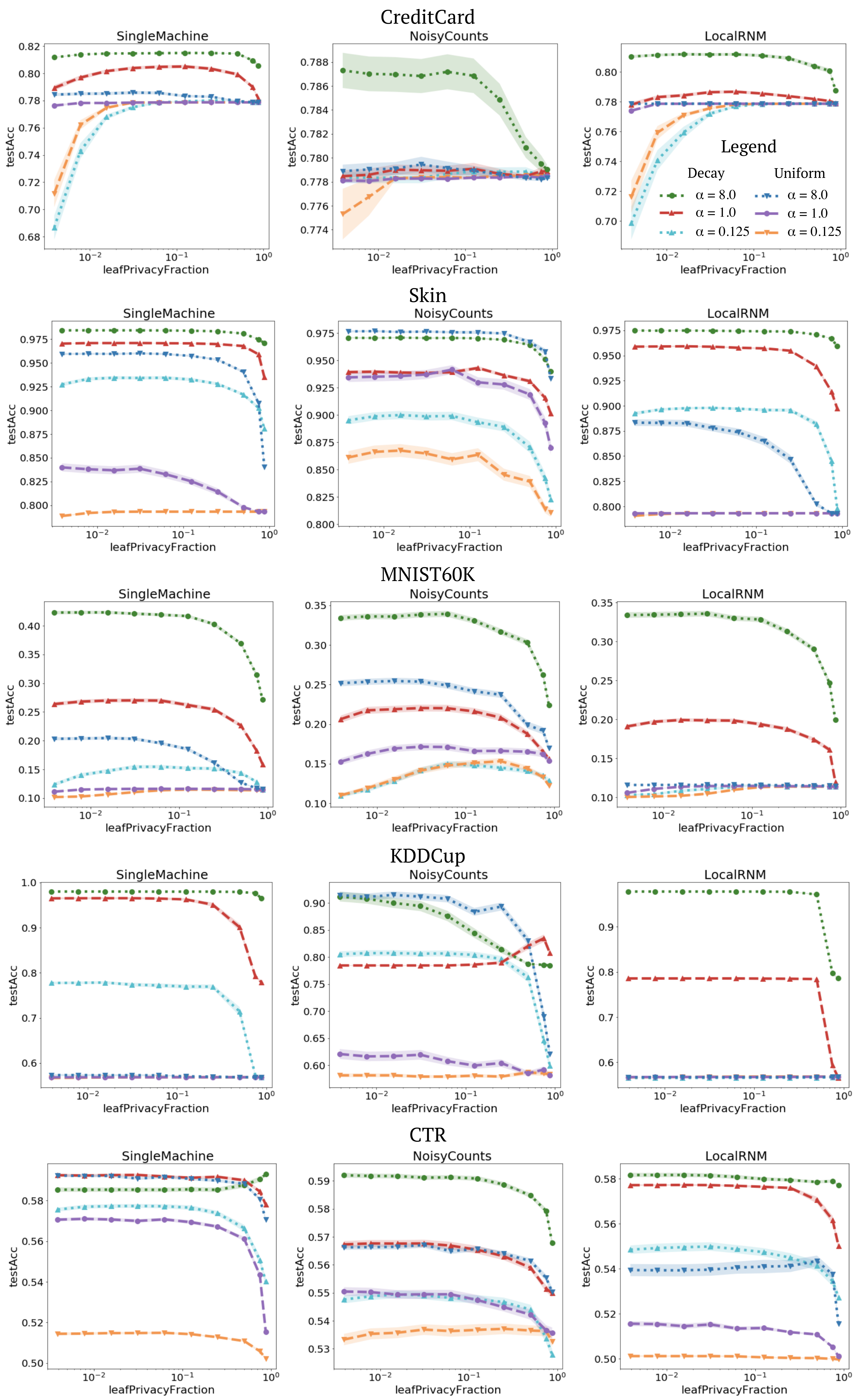}}
\vspace*{-5mm}
\caption{Test accuracy against LPF for decay and uniform budgeting strategies, and varying values of $\alpha \in \{0.125, 1, 8\}$. Same setup as Figure \ref{fig:alpha_trainingfraction_test_acc_paper_fig}, on the rest of the datasets. }
\label{fig:alpha_trainingfraction_test_acc_supp}
\end{center}
\vskip -0.2in
\end{figure*}



\subsection{More Experimental Results for Figure \ref{fig:algos_test_acc_all}}
\label{sec:more_results_for_algos_test_acc_all}

Figure \ref{fig:algo_rest_stats_all} shows the training accuracy privacy curves as well as plots of the depths and sizes of the learned decision trees in Figure \ref{fig:algos_test_acc_all}.
In general, the depth and size of the tree decreases as the privacy budget decreases. This suggests that DP-TopDown is inclined to terminate earlier if it determines that it is no longer productive to make the deeper splits. We also observe an interesting phenomenon that NoisyCounts consistently learns the deepest trees, while LocalRNM consistently learns the shallowest trees.

\begin{figure*}[ht]
\begin{center}
\centerline{\includegraphics[width=0.95\textwidth]{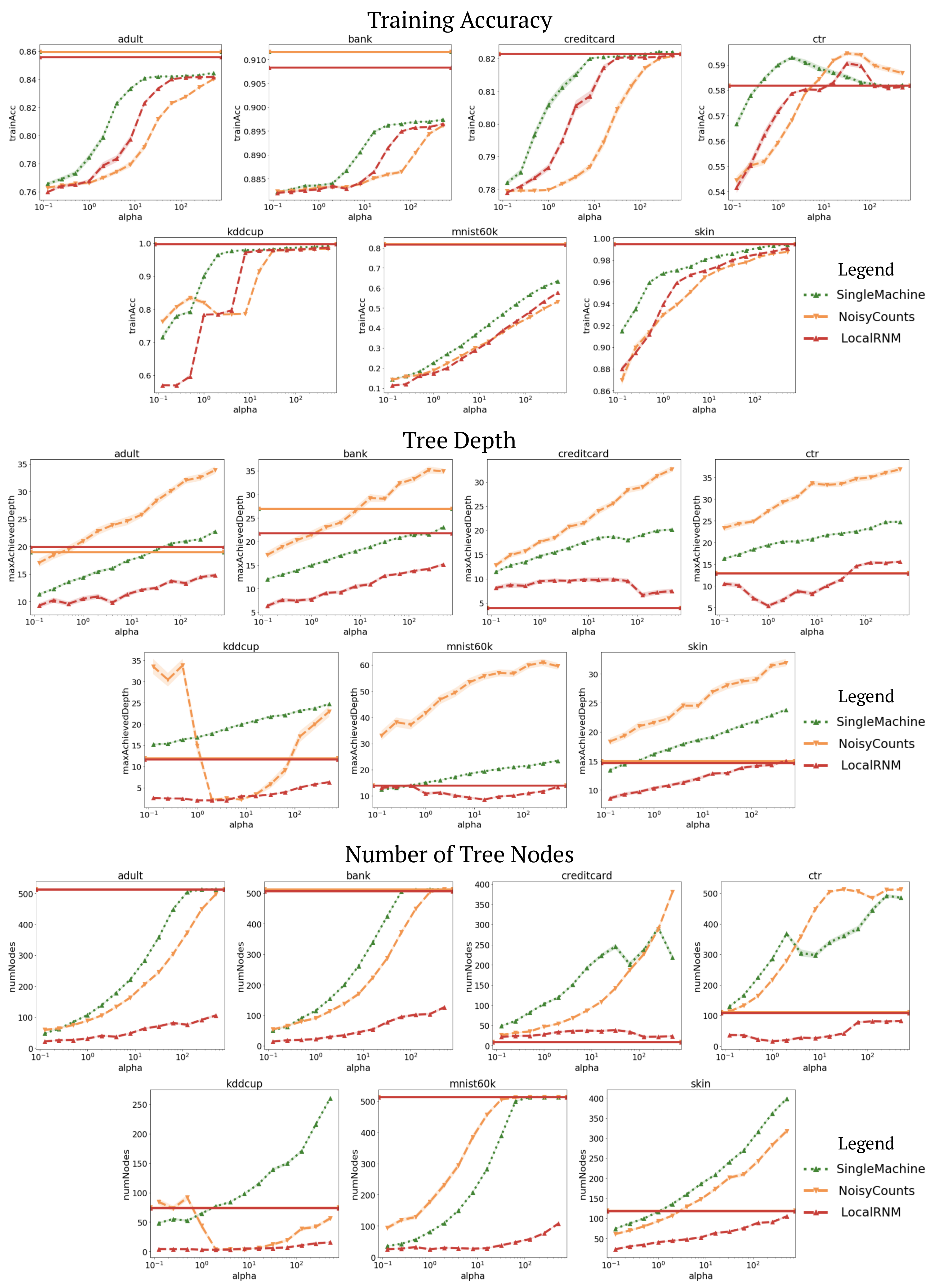}}
\vspace*{-5mm}
\caption{
Training accuracy, tree depth and number of tree nodes under same setup as Figure \ref{fig:algos_test_acc_all}.}
\label{fig:algo_rest_stats_all}
\end{center}
\vskip -0.2in
\end{figure*}

\subsection{More Experimental Results for Figure \ref{fig:datasize_test_acc_all}}
\label{sec:more_results_for_datasize_test_acc_all}
Figure \ref{fig:datasize_train_acc_all} shows the effects of training dataset size on the training accuracy privacy curves. It is essentially the training accuracy analog of Figure \ref{fig:datasize_test_acc_all}.
In these plots, we see the same trend that greater training size leads to better training accuracy, which is what we observed for testing accuracy in Figure \ref{fig:datasize_test_acc_all}. This is especially interesting for training accuracy since smaller datasets are easy to overfit and thus have higher training accuracies, which we indeed observe in the baselines. However, the training privacy curves show that the training accuracy for private decision trees trained on smaller datasets remain lower than that of private decision trees trained on larger datasets. Therefore, the plots in Figure \ref{fig:datasize_train_acc_all} suggest that the effect of dataset size on the performance of private algorithms is strong enough to even overcome the effect of overfitting on small datasets.

\begin{figure*}[ht]
\begin{center}
\centerline{\includegraphics[width=0.95\textwidth]{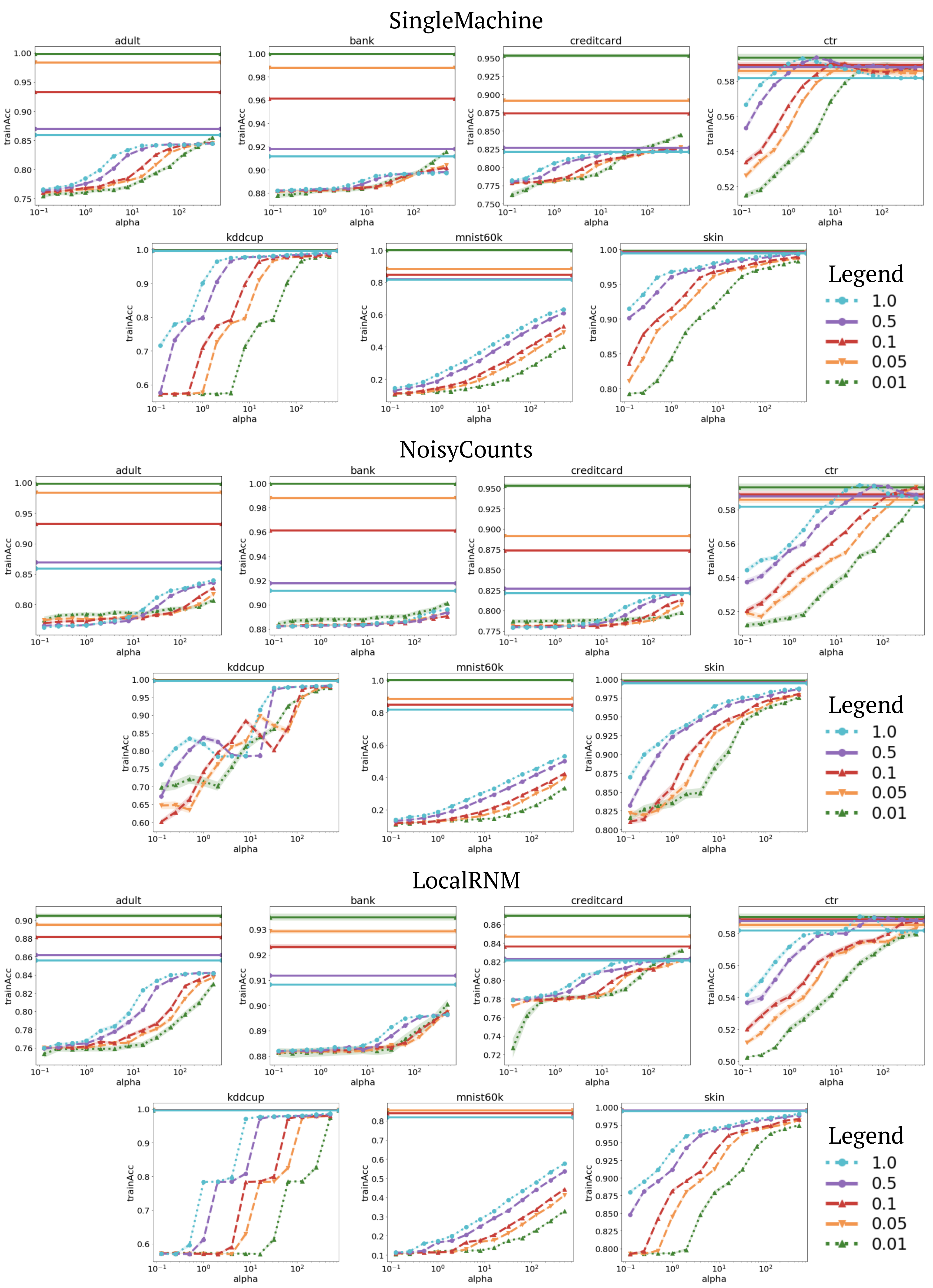}}
\vspace*{-5mm}
\caption{
Training accuracy under same setup as Figure \ref{fig:datasize_test_acc_all}. }
\label{fig:datasize_train_acc_all}
\end{center}
\vskip -0.2in
\end{figure*}

\subsection{Sensitivity of Other Splitting Criteria}
\label{sec:otherCriterion}
We now derive the sensitivity of Gini and Root Gini splitting criteria, which were analyzed in \cite{kearns1999boosting}.
\begin{lemma}
For $a, b, x, y \in [0,1]$, $|ax-by| \leq |a-b|+|x-y|$.
\end{lemma}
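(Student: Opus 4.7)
The plan is to prove this bilinear inequality by the standard add-and-subtract trick, introducing a mixed term that lets me split $|ax - by|$ into a piece controlled by $|a-b|$ and a piece controlled by $|x-y|$. Specifically, I would insert $\pm bx$ (either $bx$ or $ay$ works symmetrically) and write $ax - by = (a-b)x + b(x-y)$, then apply the triangle inequality to obtain $|ax - by| \leq |a-b|\cdot|x| + |b|\cdot|x-y|$.

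Next, I would invoke the hypothesis $a,b,x,y \in [0,1]$ to conclude $|x| \leq 1$ and $|b| \leq 1$, which immediately upgrades the bound to $|a-b| + |x-y|$, as required. No case analysis on the signs is needed since every quantity in sight is already nonnegative under the interval assumption, which keeps the argument clean.

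There is essentially no obstacle here: the only subtlety is the choice of which mixed term to introduce, and the choice $bx$ (as opposed to, say, $ay$) is arbitrary since both lead to the same final bound after applying $|x|,|b| \leq 1$. I would write the proof as a single short display and remark that the lemma extends with the same bound to any $a,b,x,y$ with $|a|,|b|,|x|,|y| \leq 1$, since only the boundedness is actually used. This lemma will then be used downstream (analogously to Lemma~\ref{lm:entropysensitivity}) to control the sensitivity of $J(\ell, h)$ under the Gini or Root Gini splitting criterion by decomposing products of estimated marginal frequencies.
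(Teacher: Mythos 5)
Your proof is correct, and since the paper states this lemma without proof (treating it as a standard fact), your add-and-subtract decomposition $ax - by = (a-b)x + b(x-y)$ followed by the triangle inequality and the bounds $|x|, |b| \leq 1$ is exactly the expected argument.
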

We now deduce the global sensitivity of the Gini Criterion $G(q) = 4q(1-q)$.
\begin{lemma}
Suppose $G(q) = 4q(1-q)$, the Gini Criterion. When applied to a dataset of size $m$, then for any $h \in H$, the sensitivity of $J'(S) = J(S,h)$ is $\Delta J' \leq \frac{20}{m}$.
\end{lemma}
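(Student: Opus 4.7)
The plan is to mirror the entropy sensitivity argument (Lemma \ref{lm:entropysensitivity}), decomposing $J(S,h)$ into the three additive pieces $G(q(\ell))$, $\tfrac{|S_{\ell_0}|}{|S_\ell|}G(q(\ell_0))$, $\tfrac{|S_{\ell_1}|}{|S_\ell|}G(q(\ell_1))$ and bounding the change in each by triangle inequality. I adopt the same notation $r_{y=a,h=b}$ for the normalized cell counts and $r_{y=a}, r_{h=b}$ for the margins. For neighboring datasets $S, S'$ of fixed size $m$, one point is replaced by another, so at most two cells shift by $1/m$ each; consequently $|r_* - r'_*| \le 1/m$ for every cell and every margin. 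With $G(q) = 4q(1-q)$, the three pieces become $4r_{y=1}(1-r_{y=1})$ and $4 \, r_{y=1,h=j} \, r_{y=0,h=j}/r_{h=j}$ for $j \in \{0,1\}$, with the convention that the last expression is $0$ when $r_{h=j}=0$.

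For the parent term I would use the sharper direct identity $|r(1-r) - r'(1-r')| = |r-r'|\,|1-r-r'| \le |r-r'| \le 1/m$, giving a contribution of at most $4/m$. Applying the stated bilinearity lemma $|ax - by| \le |a-b| + |x-y|$ here is too loose (it yields $8/m$) and would waste enough slack to miss $20/m$, so the refined identity is the natural choice.

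The main obstacle is the two child terms, since $f(a,b) := ab/(a+b)$ involves a division and is not directly covered by the $|ax-by|$ lemma. I plan to handle these by a gradient argument: $\partial_a f = b^2/(a+b)^2$ and $\partial_b f = a^2/(a+b)^2$, each lying in $[0,1]$ wherever $a+b>0$. By the mean value theorem applied along the line segment between $(r_{y=1,h=j}, r_{y=0,h=j})$ and $(r'_{y=1,h=j}, r'_{y=0,h=j})$ (both of which have positive coordinate sum), $|f - f'| \le |a-a'| + |b-b'| \le 2/m$, so each child piece contributes at most $4 \cdot 2/m = 8/m$. The edge case $r_{h=j}=0$ but $r'_{h=j}>0$ is handled separately: in that situation $r'_{y=1,h=j}, r'_{y=0,h=j} \le 1/m$, hence $f(a',b') \le \min(a',b') \le 1/m$, well within the same bound.

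Summing the three contributions via the triangle inequality gives $\Delta J' \le 4/m + 8/m + 8/m = 20/m$, as claimed. The only nontrivial step is the gradient bound for $f$; everything else is direct manipulation of the entropy-style proof.
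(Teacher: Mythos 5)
Your proof is correct and reaches the same constant, but it treats the two weighted child terms genuinely differently from the paper's proof, and more carefully. The paper factors each child contribution as $r_{h=j}\cdot r_{y=1,h=j}(1-r_{y=1,h=j})$ and bounds its change by the auxiliary lemma $|ax-by|\le|a-b|+|x-y|$ together with $|r(1-r)-r'(1-r')|\le|r-r'|$, getting $2/m$ per child. You instead keep the child contribution in its exact joint-count form $4\,r_{y=1,h=j}\,r_{y=0,h=j}/r_{h=j}$ and bound its change by a mean-value argument for $f(a,b)=ab/(a+b)$, whose partials $b^2/(a+b)^2$ and $a^2/(a+b)^2$ lie in $[0,1]$; this again gives $2/m$ per child, with the $r_{h=j}=0$ edge handled by $f(a,b)\le\min(a,b)$. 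Your route buys something real: under the paper's own definition $r_{y=i,h=j}=|\{x:f(x)=i,\,h(x)=j\}|/m$, the product $4\,r_{h=j}\,r_{y=1,h=j}(1-r_{y=1,h=j})$ is not actually equal to the weighted child Gini $\tfrac{|S_{\ell_j}|}{|S_\ell|}G(q(\ell_j))$ --- the correct identity is your ratio --- and if one instead reads $r_{y=1,h=j}$ as the conditional $q(\ell_j)$ so that the paper's factored form is right, that quantity no longer has sensitivity $O(1/m)$ when the child leaf is sparsely populated, so the bilinearity lemma cannot be invoked as stated. Your gradient argument, applied directly to the joint fractions for which the $1/m$ per-coordinate bound genuinely holds (and noting the segment between the two count vectors stays in $\{a+b>0\}$ by convexity), closes this gap cleanly. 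Your choice of the sharp identity $|r(1-r)-r'(1-r')|\le|r-r'|$ for the parent term, rather than the bilinearity lemma, matches what the paper does there, and your tally $4/m+8/m+8/m=20/m$ matches the claimed constant.
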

\begin{proof}
Use the same notation as in Lemma \ref{lm:entropysensitivity}.
\begin{align*}
    &|J(S,h)-J(S',h)| \\
    &= 4|[r_{y=1}(1-r_{y=1})-r'_{y=1}(1-r'_{y=1})] \\
    &- [r_{h=0}r_{y=1,h=0}(1-r_{y=1,h=0}) \\
    &- r'_{h=0}r'_{y=1,h=0}(1-r'_{y=1,h=0})] \\
    &- [r_{h=1}r_{y=1,h=1}(1-r_{y=1,h=1}) \\
    &- r'_{h=1}r'_{y=1,h=1}(1-r'_{y=1,h=1})]| \\
    &\leq 4[\frac{1}{m} + \frac{2}{m} + \frac{2}{m}] = \frac{20}{m}
\end{align*}
by triangle inequality, the above lemma, and that $|r_*(1-r_*)-r'_*(1-r'_*)| \leq |r_*-r'_*| \cdot |1-(r_*+r'_*)| \leq \frac{1}{m}$.
\end{proof}

\begin{lemma}
For $x,y \in [0,1]$, $|\sqrt{x}-\sqrt{y}| \leq \sqrt{|x-y|}$.
\end{lemma}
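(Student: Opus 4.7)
The plan is to reduce this to a one-line algebraic check after using symmetry and monotonicity. First, I would note that the inequality is symmetric in $x$ and $y$ (both sides depend only on the unordered pair), so without loss of generality I would assume $x \geq y$, which lets me drop absolute values: the goal becomes $\sqrt{x} - \sqrt{y} \leq \sqrt{x - y}$.

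Next, rather than squaring the goal directly (which introduces a sign condition I would have to check), I would prove the equivalent statement $\sqrt{x} \leq \sqrt{y} + \sqrt{x - y}$. Since both sides are non-negative, squaring is an equivalence, and
\[
\bigl(\sqrt{y} + \sqrt{x-y}\bigr)^2 = y + 2\sqrt{y(x-y)} + (x-y) = x + 2\sqrt{y(x-y)} \geq x,
\]
where the final inequality uses that $y \geq 0$ and $x - y \geq 0$, so the cross term is real and non-negative. Taking square roots gives $\sqrt{y} + \sqrt{x-y} \geq \sqrt{x}$, which rearranges to the desired bound.

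There is no real obstacle here. The only subtlety to flag is that the hypothesis $x, y \in [0,1]$ is never actually used; the inequality holds for all non-negative reals, and the restriction appears only because the lemma is applied to probability-like quantities elsewhere in the paper (as in the sensitivity calculations for the Root Gini criterion $G(q) = 2\sqrt{q(1-q)}$ analogous to the Gini case above). If desired, one could alternatively appeal to concavity and monotonicity of $\sqrt{\cdot}$, or use the identity $\sqrt{x} - \sqrt{y} = (x-y)/(\sqrt{x}+\sqrt{y})$ for $x,y > 0$ and the bound $\sqrt{x}+\sqrt{y} \geq \sqrt{x-y}$, but the squaring argument above is the cleanest single-step proof.
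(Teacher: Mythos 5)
Your proof is correct and takes essentially the same approach as the paper: both reduce via WLOG $x \geq y$ and then verify a one-line squaring identity (the paper bounds $(\sqrt{x}-\sqrt{y})^2 = x - 2\sqrt{xy} + y \leq x - y$ using $\sqrt{xy} \geq y$, while you rearrange to $\sqrt{x} \leq \sqrt{y} + \sqrt{x-y}$ and square the right-hand side). Your rearrangement is a minor refinement that sidesteps the implicit sign check the paper leaves to the reader.
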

\begin{proof}
WLOG $x \geq y$. Then, $(\sqrt{x}-\sqrt{y})^2 = x-2\sqrt{xy}+y \leq x-2y+y = x-y$. Square root both sides.
\end{proof}
We now deduce the global sensitivity of the Root Gini Criterion $G(q) = 2\sqrt{q(1-q)}$.
\begin{lemma}
Suppose $G(q) = 2\sqrt{q(1-q)}$, the Gini Criterion. When applied to a dataset of size $m$, then for any $h \in H$, the sensitivity of $J'(S) = J(S,h)$ is $\Delta J' \leq \frac{10}{m}$.
\end{lemma}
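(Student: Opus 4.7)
The plan mirrors the sensitivity argument for the Gini criterion proved immediately above, substituting in the Root Gini structure and chaining the two auxiliary lemmas just established: the product bound $|ax - by| \leq |a-b| + |x-y|$ for $a,b,x,y \in [0,1]$ and the newly-proved $|\sqrt{x} - \sqrt{y}| \leq \sqrt{|x-y|}$.

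First, I would derive a clean factorization of $J(S,h)$ under Root Gini, which is what makes the two lemmas compose nicely. Reusing the joint-fraction notation $r_{y=i,h=j}$ from Lemma \ref{lm:entropysensitivity} (and implicitly $r_{y=0}+r_{y=1}=1$, $r_{h=0}+r_{h=1}=1$), the substitution $q(\ell') = r_{y=1,h=b}/r_{h=b}$ inside $G(q)=2\sqrt{q(1-q)}$, together with pulling the leaf weight $r_{h=b}$ inside the radical, produces the identity
\[
w(\ell')\,G(q(\ell')) \;=\; 2\sqrt{r_{y=0,h=b}\, r_{y=1,h=b}},
\]
since squaring the weight cancels the two factors of $r_{h=b}$ in the denominator of $q(\ell')(1-q(\ell'))$. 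The analogous identity at the root gives $G(q(\ell)) = 2\sqrt{r_{y=0}\, r_{y=1}}$. Together,
\[
J(S,h) \;=\; 2\sqrt{r_{y=0}\,r_{y=1}} \;-\; 2\sqrt{r_{y=0,h=0}\, r_{y=1,h=0}} \;-\; 2\sqrt{r_{y=0,h=1}\, r_{y=1,h=1}}.
\]

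Second, I would apply the triangle inequality to $|J(S,h) - J(S',h)|$, reducing the bound to three differences of the form $2\bigl|\sqrt{ab} - \sqrt{a'b'}\bigr|$, where $|a - a'|, |b - b'| \leq 1/m$ because $S$ and $S'$ differ in exactly one example. For each such difference, I would chain the two lemmas in sequence:
\[
\bigl|\sqrt{ab} - \sqrt{a'b'}\bigr| \;\leq\; \sqrt{|ab - a'b'|} \;\leq\; \sqrt{|a - a'| + |b - b'|},
\]
the first step invoking the Root-Gini lemma and the second the product lemma.

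The main obstacle is the final bookkeeping to recover the stated constant. The preceding Gini argument expanded into four terms scaled by $4$ to land at $20/m$; here only three square-root terms appear, each scaled by $2$, and the $\sqrt{\cdot}$ introduced by the Root-Gini lemma interacts with the counts' $1/m$ perturbation in a way that is more delicate than the straightforward product bound used for Gini. Carefully tracking the constants through the chained lemmas, and exploiting the factor-of-$\tfrac{1}{2}$ coming from the halved coefficient ($2$ for Root Gini versus $4$ for Gini), is the step I would expect to require the most care in the write-up before arriving at the claimed $\Delta J' \leq 10/m$.
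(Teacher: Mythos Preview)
Your chain
\[
\bigl|\sqrt{ab}-\sqrt{a'b'}\bigr| \;\leq\; \sqrt{|ab-a'b'|} \;\leq\; \sqrt{|a-a'|+|b-b'|} \;\leq\; \sqrt{2/m}
\]
does not produce a $1/m$ bound: it produces $1/\sqrt{m}$. You flag the final bookkeeping as ``the step requiring the most care,'' but the obstruction is not the constants---it is the rate. Summing three such terms with the leading factor $2$ gives at best $6\sqrt{2/m}$, which exceeds $10/m$ for every $m\geq 8$. So the outline, as written, cannot close to the stated bound no matter how carefully the constants are tracked.

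For comparison, the paper does \emph{not} pull the child weight inside the radical as you do; it keeps each child term in the form $r_{h=b}\sqrt{q_b(1-q_b)}$ (with $q_b$ the conditional label fraction) and then asserts each bracket is $O(1/m)$ ``by triangle inequality and the above lemma.'' But that route hits the identical wall: applying $|\sqrt x-\sqrt y|\leq\sqrt{|x-y|}$ to a quantity that moves by $O(1/m)$ yields $O(1/\sqrt m)$, not $O(1/m)$. Indeed the lemma as stated is false. Take $m$ even, with the $m/2$ points satisfying $h=0$ all labeled $0$ and the $m/2$ points satisfying $h=1$ all labeled $1$; then in your factorization $J(S,h)=2\sqrt{\tfrac14}-0-0=1$. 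Flipping a single label on the $h=0$ side gives $r_{y=0,h=0}=\tfrac12-\tfrac1m$, $r_{y=1,h=0}=\tfrac1m$, so the $h=0$ child term becomes $2\sqrt{(\tfrac12-\tfrac1m)\cdot\tfrac1m}=\sqrt{2/m}+O(m^{-3/2})$ while the root term changes only by $O(1/m^2)$; hence $|J(S,h)-J(S',h)|=\Theta(1/\sqrt m)$, which is strictly larger than $10/m$. The correct sensitivity for the Root Gini criterion is $O(1/\sqrt m)$, and neither your decomposition nor the paper's can do better.
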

\begin{proof}
Use the same notation as in Lemma \ref{lm:entropysensitivity}.
\begin{align*}
    &|J(S,h)-J(S',h)| \\
    &= 2|[\sqrt{r_{y=1}(1-r_{y=1})}-\sqrt{r'_{y=1}(1-r'_{y=1})}] \\
    &- [r_{h=0}\sqrt{r_{y=1,h=0}(1-r_{y=1,h=0})} \\
    &- r'_{h=0}\sqrt{r'_{y=1,h=0}(1-r'_{y=1,h=0})}] \\
    &- [r_{h=1}\sqrt{r_{y=1,h=1}(1-r_{y=1,h=1})} \\
    &- r'_{h=1}\sqrt{r'_{y=1,h=1}(1-r'_{y=1,h=1})}]| \\
    &\leq 2[\frac{1}{m} + \frac{2}{m} + \frac{2}{m}] = \frac{10}{m}
\end{align*}
by triangle inequality and the above lemma.
\end{proof}

Since the sensitivity is still $\widetilde{O}(\frac{1}{m})$, our sample complexity bounds for entropy also hold for Gini and Root Gini.

\end{document}